\documentclass{article}

\usepackage{microtype}
\usepackage{graphicx}
\usepackage{subcaption}
\usepackage{booktabs} % for professional tables

\usepackage{hyperref}
\usepackage{pifont}

\usepackage[accepted]{icml2026}

\usepackage{amsmath}
\usepackage{amssymb}
\usepackage{mathtools}
\usepackage{amsthm}
\newcommand{\zm}{\mathbf{z}}
\newcommand{\xm}{\mathbf{x}}

\usepackage[capitalize,noabbrev]{cleveref}

\usepackage[textsize=tiny]{todonotes}

\usepackage{booktabs}
\usepackage{amsfonts}
\usepackage{amsmath}
\usepackage{amsthm}
\usepackage{bm}
\usepackage{nicefrac}
\usepackage{xcolor}
\usepackage{float}
\usepackage{colortbl}
\usepackage{graphicx}
\usepackage{wrapfig}
\definecolor{mygray}{gray}{0.9}
\usepackage{enumitem}
\usepackage{multirow}

\theoremstyle{plain}

\newtheorem*{definition*}{Definition}

\newtheorem*{assumption*}{Assumption}
\newtheorem*{principle*}{Principle}

\newtheorem{theorem}{Theorem}
\newtheorem*{theorem*}{Theorem}
\newtheorem*{proposition*}{Proposition}

\newtheorem*{corollary*}{Corollary}

\newtheorem*{lemma*}{Lemma}
\theoremstyle{remark}

\def\xm{{\mathbf{x}}}

\def\zm{{\mathbf{z}}}
\icmltitlerunning{Variational Speculative Decoding: Rethinking Draft Training from Token Likelihood to Sequence Acceptance}

\begin{document}

\twocolumn[
  % \icmltitle{VSD: Variational Speculative Decoding For Draft Policy Alignment}
   \icmltitle{Variational Speculative Decoding: Rethinking Draft Training from Token Likelihood to Sequence Acceptance}
  % Variable Optimization Framework for Speculative Model Training

  % It is OKAY to include author information, even for blind submissions: the
  % style file will automatically remove it for you unless you've provided
  % the [accepted] option to the icml2026 package.

  % List of affiliations: The first argument should be a (short) identifier you
  % will use later to specify author affiliations Academic affiliations
  % should list Department, University, City, Region, Country Industry
  % affiliations should list Company, City, Region, Country

  % You can specify symbols, otherwise they are numbered in order. Ideally, you
  % should not use this facility. Affiliations will be numbered in order of
  % appearance and this is the preferred way.
  \icmlsetsymbol{equal}{*}

  \begin{icmlauthorlist}
    \icmlauthor{Xiandong Zou}{yyy}
    \icmlauthor{Jianshu Li}{zzz}
    \icmlauthor{Jing Huang}{zzz}
    \icmlauthor{Pan Zhou}{yyy}
  \end{icmlauthorlist}

  \icmlaffiliation{yyy}{Singapore Management University}
  \icmlaffiliation{zzz}{Ant Digital Technologies, Ant Group}

  \icmlcorrespondingauthor{Pan Zhou}{panzhou@smu.edu.sg}

  % You may provide any keywords that you find helpful for describing your
  % paper; these are used to populate the "keywords" metadata in the PDF but
  % will not be shown in the document
  \icmlkeywords{Machine Learning}

  \vskip 0.3in
]

% this must go after the closing bracket ] following \twocolumn[ ...

% This command actually creates the footnote in the first column listing the
% affiliations and the copyright notice. The command takes one argument, which
% is text to display at the start of the footnote. The \icmlEqualContribution
% command is standard text for equal contribution. Remove it (just {}) if you
% do not need this facility.

% Use ONE of the following lines. DO NOT remove the command.
% If you have no special notice, KEEP empty braces:
\printAffiliationsAndNotice{}  % no special notice (required even if empty)
% Or, if applicable, use the standard equal contribution text:
% \printAffiliationsAndNotice{\icmlEqualContribution}

\begin{abstract}
Speculative decoding accelerates inference for (M)LLMs, yet a training-decoding discrepancy persists: while existing methods optimize single greedy trajectories, decoding involves verifying and ranking multiple sampled draft paths. We propose \textit{Variational Speculative Decoding} (VSD), formulating draft training as variational inference over latent proposals (draft paths). VSD maximizes the marginal probability of target-model acceptance, yielding an ELBO that promotes high-quality latent proposals while minimizing divergence from the target distribution. To enhance quality and reduce variance, we incorporate a path-level utility and optimize via an Expectation-Maximization procedure. The E-step draws Monte Carlo samples from an oracle-filtered posterior, while the M-step maximizes weighted likelihood using Adaptive Rejection Weighting (ARW) and Confidence-Aware Regularization (CAR). Theoretical analysis confirms that VSD increases expected acceptance length and speedup. Extensive experiments across LLMs and MLLMs show that VSD achieves up to a 9.6\% speedup over EAGLE-3 and 7.9\% over ViSpec, significantly improving decoding efficiency. The source code is available at \texttt{https://github.com/LV-Lab-SMU/VSD}.
\end{abstract}

\vspace{-1.5em}
\section{Introduction}
\label{sec:intro}
Large language models (LLMs) such as GPT~\citep{gpt3,gpt4} and LLaMA~\citep{llama,llama2,llama3}, as well as multimodal LLMs (MLLMs) like LLaVA~\citep{llava}, have achieved remarkable success in dialogue~\citep{mtbench,vqav2}, coding~\citep{humaneval}, and reasoning~\citep{gsm8k,ai2d,chartqa,textvqa}. Yet, most modern LLMs and MLLMs rely on autoregressive decoding~\citep{sd,sd1}: each token must be generated sequentially based on all previous tokens, which limits parallelism and yields high latency and low throughput. 
Speculative decoding mitigates this by introducing a lightweight draft model to propose multiple tokens, which the target LLM or MLLM verifies in parallel. As a result, the target model can accept multiple tokens in a single forward pass without degrading the quality of generation, where longer accepted spans can equivalently translate into better inference speedup.

Recent work has improved speculative decoding by refining draft model training. For instance, HASS~\citep{hass} enforces feature consistency to reduce hidden-state mismatches, GRIFFIN~\citep{griffin} resolves token-level misalignment, and EAGLE-3~\citep{eagle3} incorporates training-time rollouts to better mimic decoding. However, a fundamental limitation remains: a \textit{training-decoding distributional discrepancy}. The draft model is trained to favor a deterministic distribution, i.e., a single greedy path, while decoding operates over a stochastic distribution induced by ranked multi-path sampling, degrading the effectiveness of training for improving decoding performance.

To formalize this misalignment, we first consider the standard formulation of draft training.
Given a context, the draft model is typically trained with cross-entropy to predict the  next token generated by target model~\citep{eagle,eagle2}, which implicitly treats drafting as single-path prediction. 
The corresponding optimal policy is therefore \emph{greedy and almost deterministic}: at each step, it concentrates probability mass on one most-likely continuation, producing a single ``best'' draft path.
However, practical speculative decoding rarely uses a single greedy draft. 
Instead, it proposes multiple draft candidates via stochastic multi-path sampling---often in a tree~\citep{eagle2,griffin,gto,hass}---and then \emph{ranks} these candidates by path-level scores (e.g., cumulative draft probability) before sending only the top-$k$ tokens to the target model for verification.
Thus, what matters at inference is not whether the greedy path is token-wise optimal, but whether the draft model  favors a  distribution which  assigns high probability to the \emph{set of draft candidates that survive ranking and are likely to be accepted by the target over longer horizons}.

This training--decoding distributional discrepancy induced by two concrete misalignment.
\textbf{(i) Deterministic training vs.\ stochastic decoding.}
Standard cross-entropy training collapses the draft distribution toward a single greedy path, whereas practical decoding explicitly values maintaining a distribution in which its sampled several high-confidence draft paths that align with target-model preference and can pass its verification.
\textbf{(ii) Token-level likelihood vs.\ path-level utility.}
Training optimizes token-wise likelihood along one greedy trajectory given by the target model, but decoding makes decisions at the path level: draft candidates compete within a draft tree and are selected by cumulative confidence and relative ranking among siblings.
As a result, a draft path that is locally optimal under token-level training can be pruned, while a sibling branch with slightly lower per-token likelihood may dominate in cumulative score and yield longer accepted spans. 
This creates a structural inefficiency: training expends capacity improving a greedy path that may not be used by the decoding algorithm.  

% We empirically verify the misalignment using an EAGLE-3 draft model with LLaMA-3.1-8B. In Fig.~\ref{fig:1}(a), the greedy sequence $q_0$ (“You are the”, cumulative confidence 0.1530) is discarded during re-ranking in favor of sibling branches such as $q_1$ (“You can do”, cumulative confidence 0.1735) and $q_2$ (“You can have”, cumulative confidence 0.1575), despite $q_0$ being optimal under the training distribution, limiting the achievable efficiency gains at inference time. As shown in Fig.~\ref{fig:1}(b), roughly 30\% of training-time greedy paths are pruned during draft-tree construction, and the final accepted path coincides with the greedy path in only 36\% of cases. 
% Moreover, even when the greedy path is accepted, its average accepted length is only 3--4 tokens, compared to 5--6 tokens for alternative high-confidence candidates (Fig.~\ref{fig:1}(c)). 
% These findings suggest that greedy, cross-entropy training optimizes the \emph{wrong} draft distribution: it fails to prioritize the path distribution favored by the decoding procedure and, ultimately, by the target model's acceptance behavior.

We empirically verify the misalignment using an EAGLE-3 draft model with LLaMA-3.1-8B. As shown in Fig.~\ref{fig:1}(a), roughly 30\% of training-time greedy paths are pruned during draft-tree construction, and the final accepted path coincides with the greedy path in only 36\% of cases. 
Moreover, even when the greedy path is accepted, its average accepted length is only 3--4 tokens, compared to 5--6 tokens for alternative high-confidence candidates (Fig.~\ref{fig:1}(b)). 
These findings suggest that greedy, cross-entropy training optimizes the \emph{wrong} draft distribution: it fails to prioritize the draft distribution favored by the decoding procedure and, ultimately, by the target model's acceptance behavior, limiting the achievable efficiency gains at inference time.

\noindent\textbf{Contributions.}
To resolve the training--decoding distributional discrepancy in speculative decoding, we propose \emph{Variational Speculative Decoding} (VSD), a training framework that targets the \emph{distribution over draft paths accepted by the target model}. Our contributions are shown below. 

First,  we propose a novel Variational Speculative Decoding (VSD) framework that reformulates draft model training as a variational inference problem. Unlike existing methods that rely on token-level supervision along one path, we treat the draft path as a latent proposal and aim to maximize the marginal likelihood of the target model's acceptance. Based on this formulation, we derive a principled {Evidence Lower Bound (ELBO)} that serves as our training objective. The ELBO includes two terms: the first encourages the generation of high-quality and acceptable latent proposals, and the second that acts as a regularizer to keep the draft policy aligned with the target distribution induced by the target model, effectively bridging the training-decoding gap.

Then, to optimize the intractable ELBO, we develop a tailored EM algorithm. In the E-step, we approximate the variational posterior with oracle-filtered Monte Carlo sampling to concentrate on high-acceptance paths. In the M-step, we update the draft model by maximizing a weighted likelihood over these posterior samples. We further improve stability and efficiency with three components: \emph{path-level utility} to favor longer accepted spans and reduce variance, \emph{adaptive rejection weighting}  to leverage informative rejected proposals, and \emph{confidence-aware regularization}  to discourage overconfident yet invalid paths that tend to fail verification.

Besides, we provide a rigorous theoretical analysis to justify our variational objective. We prove that maximizing the VSD objective is equivalent to increasing the lower bound of the \textit{expected acceptance length}. By establishing a direct mathematical link between our variational bound and the wall-clock speedup ratio, we demonstrate that VSD is theoretically guaranteed to improve the efficiency of speculative decoding compared to traditional token-level likelihood-based training, given the fixed target distribution.

Finally, VSD consistently improves both acceptance length and inference speedup across language and multimodal settings. On MT-Bench~\citep{mtbench}, HumanEval~\citep{humaneval}, and GSM8K~\citep{gsm8k} with LLaMA-3.1-8B, LLaMA-3.3-70B, DeepSeek-R1-Distill-LLaMA-8B, and Vicuna-13B, VSD outperforms EAGLE-3~\citep{eagle3}, improving acceptance length by $7.6\%$ and speedup by $9.6\%$. On six multimodal benchmarks including SQA Image~\citep{sqa} and VQAv2~\citep{vqav2} with LLaVA-1.5-7B/13B, VSD improves acceptance length by $7.9\%$ and speedup by $7.3\%$ over the best ViSpec baseline~\citep{vispec}.

\begin{figure}[t!]
    \centering
    \begin{subfigure}[b]{0.49\linewidth}
        \centering
        \includegraphics[width=\linewidth]{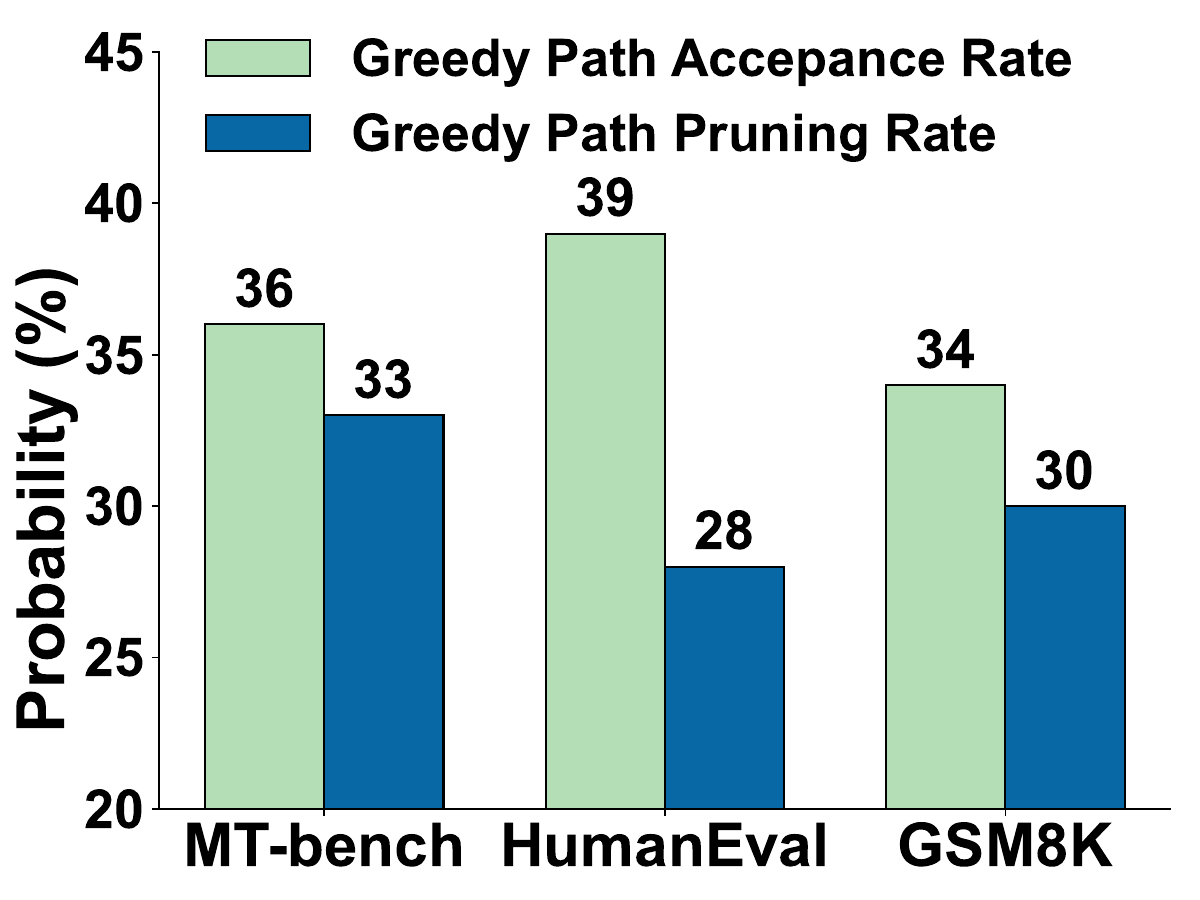}
        \caption{}
        \label{img:1b}
    \end{subfigure}
    \hfill
    \begin{subfigure}[b]{0.49\linewidth}
        \centering
        \includegraphics[width=\linewidth]{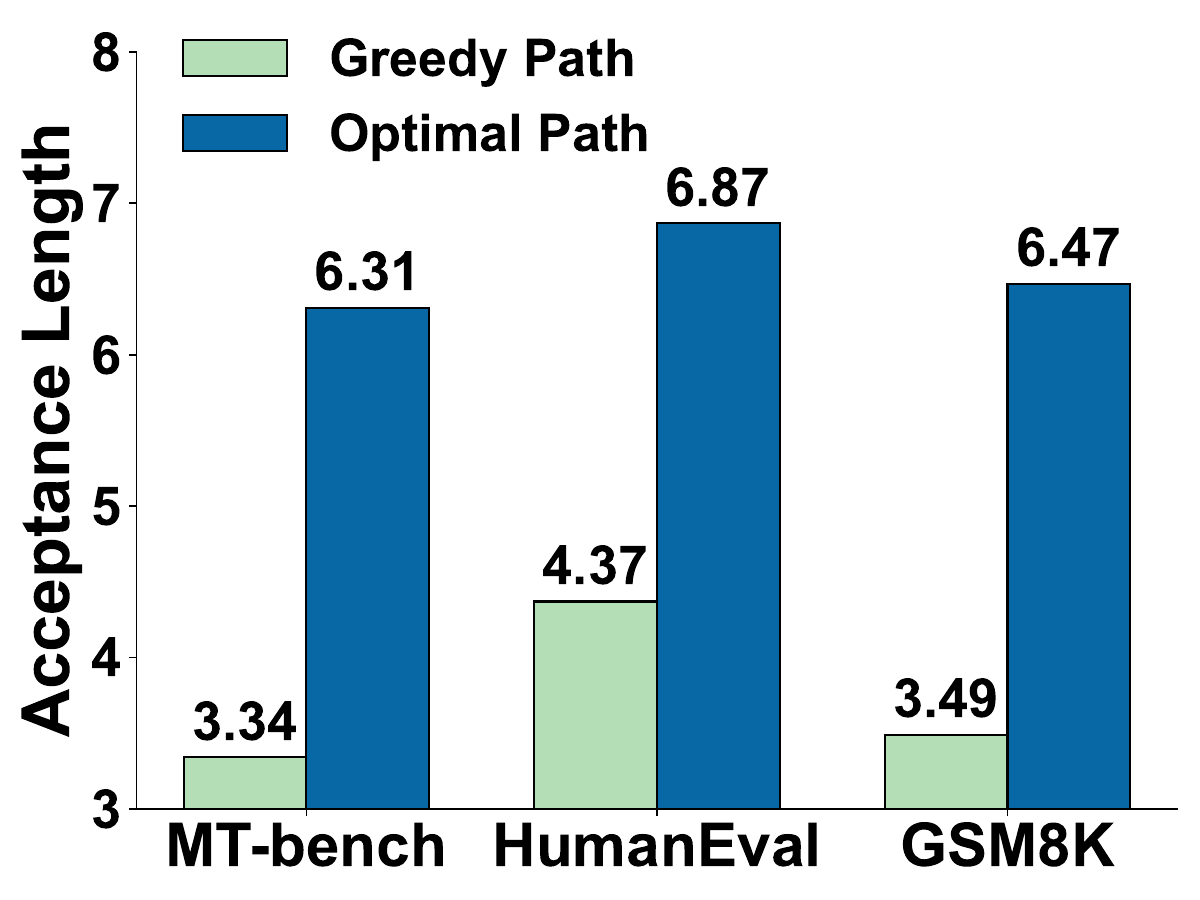}
        \caption{}
        \label{img:1c}
    \end{subfigure}
    \caption{(a) Fraction where the accepted path coincides the greedy path and fraction of training-time greedy paths that are pruned during draft tree construction. (b) Comparison of acceptance length between greedy path and optimal high-confidence path.}
    \label{fig:1}
\vspace{-1em}
\end{figure}

\section{Related Work}
Speculative decoding accelerates LLM inference by decoupling generation into a lightweight \emph{drafting} stage and a parallel \emph{verification} stage~\cite{sd,sd1,sd2,sd3,sd4}. To optimize the drafting stage, research has branched into several directions: removing separate drafters via auxiliary heads~\citep{medusa} or layer-skipping~\citep{speed,elhoushi2024layerskip,kangaroo,swift}, and enhancing draft quality through token distillation~\citep{zhou2023distillspec}, N-gram-based prediction~\citep{ou2024lossless,liu2025adaptive}, or retrieval-augmented generation~\citep{he2024rest,zhao2024ouroboros}. 

On the verification side, modern methods have shifted from single-path to multi-path exploration to maximize throughput~\citep{sd2,chen2024sequoia,sd3}. This is exemplified by tree-based verification strategies like EAGLE~\citep{eagle,eagle2} and EAGLE-3~\citep{eagle3}, which utilize uncertainty estimation and multi-token prediction to construct dynamic draft trees. Alternative paradigms, such as Jacobi decoding~\citep{jacobi,teng2025speculative} and Lookahead decoding~\citep{lookahead}, further accelerate inference by reformulating generation as a parallelizable optimization problem. 

Despite these advancements, a fundamental \emph{training-decoding distributional discrepancy} persists. While modern decoders sample, rank, and verify multiple draft paths~\citep{eagle3,hass,griffin}, the underlying draft models are still trained to concentrate probability mass on a single deterministic trajectory via token-level likelihood optimization. This mismatch forces the drafter to learn a distribution that diverges from the multi-path trajectories explored at inference time, ultimately limiting acceptance length and achievable speedups. To bridge this gap, we propose VSD framework, which reformulates draft training as a variational inference problem. By optimizing an MC-based EM framework~\citep{em,mcmc}, VSD aligns the draft policy with the posterior distribution induced by multi-path decoding. VSD serves as a principled, complementary framework that enhances the efficiency of existing speculative decoding techniques.

% % GTO~\citep{gto} represents a notable attempt to address this issue by explicitly aligning training with the decoding-time tree policy. GTO introduces a group-based policy optimization scheme based on proposed scalar draft tree reward. While effective, instead of utilizing direct supervision, GTO must rely on complex group-based baselines and clipped surrogate objectives to stabilize training. Consequently, GTO suffers from high variance and requires complex group-based baselines to stabilize, incurring substantial computational overhead during training.
\section{Preliminary}
\label{sec:prelim} 
Speculative decoding accelerates autoregressive generation via a \emph{draft--verify} scheme.  In draft phase, given the  observed context (prefix) $\mathbf{x}$,  the draft model  $D$ generates the draft path (trajectory) $\mathbf{z}\sim q_{\psi}(\mathbf{z}|\mathbf{x})$, where $q_{\psi}(\mathbf{z}|\mathbf{x})$ denotes the induced distribution of $D$.  Then in verification phase, the target model $T$ accepts  each token $ z_{i}$ if 
\begin{equation}
	\label{eq:acc}
	\alpha_i(\mathbf{x},\mathbf{z}_{<i}) \!=\! \min\!\left(1,{p_{\theta}(\mathbf{z_i}|\mathbf{x, \mathbf{z_{<i}}})}/{q_{\psi}(\mathbf{z_i}|\mathbf{x, \mathbf{z_{<i}}})}\right)\geq r,
\end{equation}
where $\mathbf{z}_{<i}$ denotes tokens $\{ \mathbf{z}_1,  \mathbf{z}_2, \cdots,  \mathbf{z}_{i-1}\}$, $p_{\theta}( \cdot |\cdot)$ is the induced distribution of target model $T$,  and  $r$ is sampled from a uniform distribution, i.e., $r\sim U[0,1]$.   If $\mathbf{z_i}$ is rejected, the corresponding trajectory is truncated and a correction token is sampled from the  distribution
$\operatorname{norm}\!\Big(\max\!\big(0,\; {p_{\theta}(\cdot|\mathbf{x, \mathbf{z_{<i}}})}-{q_{\psi}(\cdot|\mathbf{x, \mathbf{z_{<i}}})}\big)\Big),$ 
where $ \operatorname{norm}$ denotes a normalization. This guarantees that the final output is an unbiased sample from $p_{\theta}(\cdot|\cdot)$.

 \section{Methodology}
\label{sec:method}

\subsection{Variational Speculative Decoding}
\label{sec:vsd}

As introduced in Sec.~\ref{sec:prelim}, standard speculative decoding proceeds in a \emph{draft--verify} manner: given a prefix $\mathbf{x}$, the draft model proposes a draft path $\mathbf{z}$, and the target model verifies it. A draft contributes to speedup only through the accepted prefix---the more tokens that survive verification, the greater the wall-clock speedup. Therefore, an effective training objective should directly favor draft proposals with a higher acceptance rate (ideally yielding long accepted spans), rather than merely matching the next-token distribution along a single greedy trajectory. This motivates addressing the training--decoding distributional discrepancy caused by (i) deterministic single-path training vs.\ stochastic multi-path decoding, and (ii) token-level likelihood vs.\ path-level utility.

To solve these two issues, we formulate the speculative decoding from a variational inference perspective below.  

\textbf{Path-level Validity Probability.} 
Given a prefix $\mathbf{x}$,  let $\mathbf{z}=(\zm_1,\dots,\zm_\ell)$ be a draft path proposed by the draft model $q_\psi$.
During verification, for each position $i$, the target model $p_\theta$ accepts the proposed token $z_i$ with probability $
	\alpha_i(\mathbf{x},\mathbf{z}_{<i})
	=
	\min\!\left(1,\,
	\frac{p_\theta(z_i \mid \mathbf{x},\mathbf{z}_{<i})}{q_\psi(z_i \mid \mathbf{x},\mathbf{z}_{<i})}
	\right), $
i.e., a sample $r\sim \mathrm{Unif}[0,1]$ is drawn and the token is accepted if $r\le \alpha_i(\mathbf{x},\mathbf{z}_{<i})$ (see Sec.~\ref{sec:prelim}).
Accordingly, we define the \emph{path-level validity probability} as the joint probability 
\begin{equation}
	\label{eq:kappa_vi_polished}
	\kappa(\mathbf{x},\mathbf{z})
	\;:=\;
	\prod\nolimits_{i=1}^{\ell}\alpha_i(\mathbf{x},\mathbf{z}_{<i})
	\in[0,1],
\end{equation}
which is the cumulative probability that all proposed tokens in $\mathbf{z}$ are accepted. Intuitively, $\kappa(\mathbf{x},\mathbf{z})$ is an overall probability induced by the target model: paths with larger $\kappa$ are more likely to survive verification and thus yield greater speedup.

\textbf{A Latent-variable View of Verification.}
We now model the verification as a probabilistic model.
Let $\rho\in\{0,1\}$ indicate whether a proposed draft path is valid (i.e., all its tokens are accepted).
Conditioned on $\mathbf{x}$, define
\begin{align}
	\mathbf{z} &\sim p_\theta(\mathbf{z}\mid \mathbf{x}),  \qquad( \mathbf{z} \sim\text{prior distribution)} \label{eq:gen_prior_polished}\\
	\rho \mid (\mathbf{x},\mathbf{z}) &\sim \mathrm{Bernoulli} \big(\kappa(\mathbf{x},\mathbf{z})\big).\  \label{eq:gen_verif_polished}  
\end{align}
In actual decoding, draft path $\mathbf{z}$ is sampled from the draft model: $\zm \sim q_\psi(\zm\mid \xm)$.  In a variational derivation, we often define a prior reference distribution over latent proposal $\zm$, like the Gaussian distribution as the prior in  VAE~\citep{vi}. $\zm \sim p_\theta(\zm\mid \xm) $ does not  to describe how decoding samples $\zm$, but to define a reference distribution over paths  that we would like draft model $q_\psi$ to approximate. 

Under this model, the marginal probability that a randomly drawn path is valid under verification equals
\begin{equation}
	\label{eq:Z_theta_polished}
	Z_\theta(\mathbf{x})
	\;:=\;
	p_\theta(\rho=1\mid \mathbf{x})
	=
	\sum\nolimits_{\mathbf{z}} p_\theta(\mathbf{z}\mid \mathbf{x})\,\kappa(\mathbf{x},\mathbf{z}).
\end{equation}
Maximizing $\log Z_\theta(\mathbf{x})$ concentrates probability mass on the distribution of paths that are likely to pass verification, directly reflecting the decoding objective: the draft model should propose paths that the target model will accept.

\textbf{Variational Inference and an ELBO on $\log Z_\theta(\mathbf{x})$.}
By Bayes' rule, the true posterior is
\begin{equation}
	\label{eq:posterior_valid_vi}
	p_\theta(\mathbf{z}\mid \mathbf{x},\rho=1)
	=
	\frac{p_\theta(\mathbf{z}\mid \mathbf{x})\,\kappa(\mathbf{x},\mathbf{z})}{Z_\theta(\mathbf{x})}
	\;\propto\;
	p_\theta(\mathbf{z}\mid \mathbf{x})\,\kappa(\mathbf{x},\mathbf{z}),
\end{equation}
which focuses on paths that are simultaneously plausible under the target model and likely to be accepted.  However, the posterior~\eqref{eq:posterior_valid_vi} is intractable, as it requires summing over all
possible latent proposals. In addition, during decoding, latent proposals must be generated by the draft model $q_\psi(\mathbf{z}\mid \mathbf{x})$ without accessing to the true posterior.

To make learning feasible, we  introduce a variational distribution $q_\psi(\mathbf{z}\mid \mathbf{x})$ given by the draft model to derive an evidence lower bound (ELBO)~\citep{vi}:
% This formulation allows the draft model $q_{\psi}$ to capture stochasticity and explore multiple candidate paths, aligning training with stochastic decoding. 
%Using the Jensen's inequality, we obtain:
\begin{align}
	&\log Z_\theta(\mathbf{x})
	=\log \sum_\mathbf{z} q_\psi(\mathbf{z}\mid \mathbf{x})\,\frac{p_\theta(\mathbf{z}\mid \mathbf{x})\kappa(\mathbf{x},\mathbf{z})}{q_\psi(\mathbf{z}\mid \mathbf{x})} \nonumber\\
	&=\log \mathbb E_{\mathbf{z}\sim q_\psi(\cdot\mid \mathbf{x})}\!\left[\frac{p_\theta(\mathbf{z}\mid \mathbf{x})\kappa(\mathbf{x},\mathbf{z})}{q_\psi(\mathbf{z}\mid \mathbf{x})}\right] \nonumber\\
	&\ge
	\mathbb E_{\mathbf{z}\sim q_\psi(\cdot\mid \mathbf{x})}\!\Big[\log \kappa(\mathbf{x},\mathbf{z}) + \log p_\theta(\mathbf{z}\mid \mathbf{x}) - \log q_\psi(\mathbf{z}\mid \mathbf{x})\Big] \nonumber\\
	&=
	\underbrace{\mathbb E_{q_\psi(\mathbf{z}\mid \mathbf{x})}[\log \kappa(\mathbf{x},\mathbf{z})]}_{\text{Prefer target-accepted paths}}
	\;-\; \underbrace{\mathbb{D}_{\mathrm{KL}}\!\big(q_\psi(\mathbf{z}\mid \mathbf{x})\,\|\,p_\theta(\mathbf{z}\mid \mathbf{x})\big)}_{\text{Stay close to target distribution}} \nonumber\\
	&=:\mathcal L_{\mathrm{VSD}}(\psi;\mathbf{x}) \label{eq:elbo_vi},
\end{align}
where $\mathbb{D}_{\mathrm{KL}}(\cdot\mid\cdot)$ is the KL divergence and $\log\kappa(\mathbf{x},\mathbf{z})$ denotes the log probability of proposing a valid draft path given the context. The ELBO admits a interpretation based on two terms: the draft model is encouraged to assign probability mass to paths with high acceptance probability, while remaining close to the reference distribution $p_\theta(\mathbf{z}\mid \mathbf{x})$ to avoid degenerate solutions. We yield our training objective:
\begin{equation}
	\label{eq:vsd_eq5}
	\max\nolimits_\psi\ \ \mathcal L_{\mathrm{VSD}}(\psi;\mathbf{x}).
\end{equation}
This variational objective~\eqref{eq:vsd_eq5} resolves the two sources of training-decoding misalignment.  (i) Deterministic vs. stochastic misalignment: by optimizing an expectation over $\mathbf{z}\sim q_\psi(\cdot\mid\mathbf{x})$, VSD explicitly trains under a \emph{stochastic path distribution}, matching multi-path decoding rather than a single greedy trajectory.   (ii) Token-level vs. path-level misalignment:  $\log\kappa(\mathbf{x},\mathbf{z})$ evaluates proposals at the \emph{path level}, pushing probability mass toward trajectories that survive ranking/verification and typically yield longer accepted spans. This differs from  standard draft training that is locally optimal under token-level likelihood. 

Finally, since exact computation of the posterior over valid paths is intractable, we optimize~\eqref{eq:vsd_eq5} using a stochastic Expectation-Maximization (EM) framework with Monte Carlo (MC) sampling: the E-step generates oracle-filtered path samples aligned with decoding-time exploration, and the M-step updates $q_\psi$ toward these posterior samples while controlling variance and avoiding overconfident collapse. Due to the space constraint, detailed algorithm is provided in Appendix~\ref{app:algo}, with a high-level derivation in Sec.~\ref{sec:E}--\ref{sec:M}.

\subsection{Expectation step}
\label{sec:E}

The E-step constructs an empirical approximation of the variational posterior over valid latent drafts, rather than committing to a single greedy decoding path.
This is achieved via a stochastic Monte Carlo proposal-and-filter mechanism~\citep{mcmc}. Specifically, the draft policy serves as a proposal distribution, generating a batch of candidates that are assessed by a path-level validity oracle (Algorithm~\ref{alg:oracle} in Appendix~\ref{app:algod}). This oracle acts as the constraint on the posterior support, retaining high-utility proposals within the valid support while rejecting low-quality ones. Rejected proposals are substituted with corrective samples drawn from the target policy, aligning the empirical distribution with the true posterior. This process yields a set of latent proposals approximating the support of the variational posterior distribution, capturing the multi-path nature of decoding.

\subsection{Maximization Step}
\label{sec:M}
The M-step updates the draft model parameters $\psi$ by maximizing the log-likelihood of latent proposals sampled in the E-step.
Formally, we estimate the gradient of the underlying variational objective to encourage the draft model to align with the posterior distribution of valid trajectories.

A straightforward estimator (Eqn.~\eqref{eq:basic} in Appendix~\ref{app:algod}) performs a standard maximum-likelihood update over proposed trajectories. However, directly optimizing the path-level posterior introduces two practical challenges:
(i) high-variance gradients due to rejected latent proposals, and
(ii) overconfident assignment of probability mass to locally plausible but globally invalid paths.
Both issues lead to unstable training and inefficient draft distributions.

To address these challenges, we adopt a variance-reduced gradient estimator (Eqn.~\eqref{eq:advance} in Appendix~\ref{app:algod}), and incorporate the following complementary mechanisms:
\emph{Adaptive Rejection Weighting (ARW)} and \emph{Confidence-Aware Regularization (CAR)}.
They stabilize optimization and improve path-level confidence assignment.

\noindent\textbf{Adaptive Rejection Weighting (ARW).}
To mitigate the gradient variance arising from rejected proposals, ARW acts as a dynamic reweighting scheme inspired by variance reduction techniques in MC sampling~\citep{mcmc,trice}.
Since standard gradients ignore rejected samples, ARW introduces a control variate coefficient $\beta$ (Eqn.~\eqref{eq:beta_update} in Appendix~\ref{app:algod}) that modulates the contribution of rejected samples based on the draft model's empirical reliability.
When the draft model is weak ($\beta \approx 0$), most proposals are rejected and
negative gradients are down-weighted, preventing unstable updates. When the draft model is strong ($\beta \approx 1$), rejections receive greater emphasis, improving discrimination among competing paths. By adapting the updated gradient to the draft model’s reliability, ARW reduces gradient variance while preserving informative signals.

\noindent\textbf{Confidence-Aware Regularization (CAR).}
While ARW stabilizes the update, it does not explicitly penalize the draft model for assigning high confidence to invalid paths—a critical issue that increases verification cost in speculative decoding~\citep{eagle2}. We address this issue by reweighting rejected proposals according to the draft model’s confidence via $\zeta$ (Eqn.~\eqref{eq:zeta_update} in Appendix~\ref{app:algod}), inspired by hard-negative mining~\citep{hs,hps,hs1}. Rejected paths assigned a high probability by the draft model receive larger
penalties, while low-confidence rejections are down-weighted to preserve exploration.
This confidence-aware reweighting encourages probability mass to concentrate on
leading branches, yielding a more compact and effective draft tree.
As a result, both draft path construction and verification overhead are reduced.

\subsection{Theoretical Analysis}
\label{sec:theory_elbo_speedup}
We first provide a theoretical analysis linking the proposed VSD objective to decoding-time efficiency. 
Conditioned on a prefix $\mathbf{x}$, the draft model proposes the latent proposal $\mathbf{z}=(\zm_1,\zm_{2},\ldots,\zm_{\ell})$. 
Speculative verification proceeds token-by-token with acceptance probabilities $
\alpha_i(\mathbf{x},\mathbf{z}_{<i})
=
\min\!\left(1,\,
\frac{p_\theta(z_i \mid \mathbf{x},\mathbf{z}_{<i})}{q_\psi(z_i \mid \mathbf{x},\mathbf{z}_{<i})}
\right)$ 
and we define the \emph{survival probability} of the first $k$ tokens as
\begin{equation}
	\label{eq:survival_def}
	S_k(\mathbf{x},\mathbf{z})
:=\prod\nolimits_{i=1}^{k}\alpha_i(\mathbf{x},\mathbf{z}_{<i}), \qquad k=1,\ldots,\ell.
\end{equation}
In particular, the probability that the full length-$\ell$ path is accepted is $\kappa(\mathbf{x},\mathbf{z}) := S_\ell(\mathbf{x},\mathbf{z})$ (see Eqn.~\eqref{eq:kappa_vi_polished}).

\textbf{Expected Accepted Length.} 
Let $A(\mathbf{x},\mathbf{z})$ be the random variable denoting the number of consecutive draft tokens accepted by the verifier, truncated at length $\ell$.
Conditioned on $(\mathbf{x},\mathbf{z})$, the event $\{A(\mathbf{x},\mathbf{z})\ge k\}$ denotes the first $k$ tokens are accepted, occurring with probability $S_k(\mathbf{x},\mathbf{z})$.
We have
\begin{equation}
	\label{eq:accepted_len_cond}
	\begin{aligned}
		\mathbb{E}\big[A(\mathbf{x},\mathbf{z})\!\mid\! \mathbf{x},\mathbf{z}\big]
		\!=\!\sum_{k=1}^{\ell}\!\mathbb{P}\big(A(\mathbf{x},\mathbf{z})\ge k \!\mid\! \mathbf{x},\mathbf{z}\big)
		\!=\!\sum_{k=1}^{\ell}\! S_k(\mathbf{x},\mathbf{z}).
	\end{aligned}
\end{equation}
Given the latent proposal distribution $q_{\psi}(\mathbf{z}\mid \mathbf{x})$, we define the \emph{expected accepted length} as $\mathcal{A}\big(q_\psi;\mathbf{x}\big) = \mathbb{E}_{\mathbf{z}\sim q_{\psi}(\cdot\mid \mathbf{x})} \mathbb{E}\big[A(\mathbf{x},\mathbf{z})\mid \mathbf{x},\mathbf{z}\big]$, and express it as 
\begin{equation}
	\label{eq:EA_def}
	\mathcal{A}\big(q_\psi;\mathbf{x}\big)
	=
	\mathbb{E}_{\mathbf{z}\sim q_\psi(\cdot\mid \mathbf{x})}
	\Big[\sum\nolimits_{k=1}^{\ell} S_k(\mathbf{x},\mathbf{z})\Big].
\end{equation}

\begin{theorem} %[Variational identity and optimal posterior]
	\label{thm:variational_identity}
	By optimizing the VSD objective in Eqn.~\eqref{eq:vsd_eq5},  
	the expected accepted length satisfies 
	\begin{equation}
		\label{eq:A_lower_by_Elogkappa1}
		\mathcal{A}\big(q_{\psi};\mathbf{x}\big)
		\;\ge\;
		\ell\,\exp\!\big(\mathcal{L}_{\mathrm{ VSD}}(q_{\psi};\mathbf{x})\big).
	\end{equation}
\end{theorem}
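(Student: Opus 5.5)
The bound will follow from three elementary inequalities chained together: monotonicity of the survival probabilities in $k$, Jensen's inequality for the exponential, and nonnegativity of the KL divergence. No property of the optimizer is used, so the bound holds for any draft distribution $q_\psi$, and in particular for the one obtained by maximizing \eqref{eq:vsd_eq5}. Maximizing $\mathcal L_{\mathrm{VSD}}$ then raises a certified lower bound on $\mathcal{A}$.

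\emph{Step 1 (monotonicity of survival).} Each $\alpha_i\in[0,1]$, so $S_k(\mathbf{x},\mathbf{z})$ in \eqref{eq:survival_def} is nonincreasing in $k$. Hence $S_k(\mathbf{x},\mathbf{z})\ge S_\ell(\mathbf{x},\mathbf{z})=\kappa(\mathbf{x},\mathbf{z})$ for every $k\le \ell$. Summing over $k$ in \eqref{eq:accepted_len_cond} gives $\mathbb{E}[A\mid\mathbf{x},\mathbf{z}]\ge \ell\,\kappa(\mathbf{x},\mathbf{z})$, and taking the expectation over $q_\psi$ in \eqref{eq:EA_def} yields
\[
\mathcal{A}(q_\psi;\mathbf{x})\;\ge\;\ell\,\mathbb{E}_{q_\psi}[\kappa(\mathbf{x},\mathbf{z})].
\]

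\emph{Step 2 (Jensen).} Write $\kappa=\exp(\log\kappa)$ and apply Jensen's inequality to the convex function $\exp$:
\[
\mathbb{E}_{q_\psi}[\kappa]\;\ge\;\exp\big(\mathbb{E}_{q_\psi}[\log\kappa]\big).
\]
If $\kappa=0$ on a set of positive $q_\psi$-mass, then $\mathbb{E}_{q_\psi}[\log\kappa]=-\infty$. In that case $\mathcal L_{\mathrm{VSD}}=-\infty$, the right-hand side of \eqref{eq:A_lower_by_Elogkappa1} is $0$, and the theorem holds trivially. So I will restrict to the case where $\log\kappa$ is integrable.

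\emph{Step 3 (drop the KL term).} Since $\mathbb{D}_{\mathrm{KL}}(q_\psi\|p_\theta)\ge 0$, the definition \eqref{eq:elbo_vi} gives $\mathbb{E}_{q_\psi}[\log\kappa]\ge \mathcal L_{\mathrm{VSD}}(q_\psi;\mathbf{x})$. Because $\exp$ is monotone, chaining Steps 1 through 3 gives exactly \eqref{eq:A_lower_by_Elogkappa1}.

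The argument is short. The only delicate point is the edge case in Step 2, where $\kappa$ can vanish and the logarithms must be read in the extended reals. I will also remark that the bound is loose in two places: the gap between $S_k$ and $\kappa$ in Step 1, and the KL term discarded in Step 3. Both gaps shrink as $q_\psi$ approaches $p_\theta$, because then every $\alpha_i\to1$ and the KL term vanishes.
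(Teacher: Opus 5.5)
Your proposal is correct and follows the paper's proof essentially step for step. Both arguments use monotonicity $S_k\ge S_\ell=\kappa$ to get $\mathcal{A}(q_\psi;\mathbf{x})\ge \ell\,\mathbb{E}_{q_\psi}[\kappa]$, then Jensen's inequality for $\exp$, then drop the nonnegative KL term, and both bounds hold for every $q_\psi$, not only the optimizer. Your explicit handling of the case $\kappa=0$, reading $\log\kappa$ in the extended reals, is a small improvement over the paper, which simply assumes $\alpha_i\in(0,1]$.
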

See its proof in Appendix~\ref{asfdsfs}.  Theorem~\ref{thm:variational_identity} shows that increasing the VSD objective by a factor of $\Delta>0$ increases a \emph{provable lower bound} on the expected accepted length by a multiplicative factor of $e^{\Delta}$.
As decoding speedup grows monotonically with the number of accepted draft tokens, this  justifies optimizing $\mathcal{L}_{\mathrm{VSD}}$ for improved inference efficiency.

\textbf{Advantage over KL-only Draft Training.}
We show that the VSD objective provides a strictly tighter lower bound on expected accepted length than KL-only draft training.

\begin{theorem}
	\label{thm:better_than_kl_only}
	Let $q_{\psi}^\star(\cdot\mid\mathbf{x})\in\arg\max_{q_\psi}\mathcal{L}_{\mathrm{VSD}}(q_\psi;\mathbf{x})$
	be the optimizer of the VSD objective in Eqn.~\eqref{eq:vsd_eq5}, and let
		$q_{\mathrm{KL}}(\cdot\mid\mathbf{x})
		=
		p_\theta(\cdot\mid\mathbf{x})$
	be the minimizer of the KL-only objective
	$\min_{q_\psi}\mathbb{D}_{\mathrm{KL}}(q_\psi\|p_\theta)$.
	Then,
	\begin{equation}
    \label{eq:ELBO_gap_main}
    \begin{aligned}
    \mathcal{L}_{\mathrm{VSD}}(q_{\psi}^\star;\mathbf{x})
    &=
    \log Z_\theta(\mathbf{x}) \\
    &\ge
    \mathbb{E}_{\mathbf{z}\sim p_\theta(\cdot\mid\mathbf{x})}
    \!\left[
    \log \kappa(\mathbf{x},\mathbf{z})
    \right] \\
    &=
    \mathcal{L}_{\mathrm{VSD}}(q_{\mathrm{KL}};\mathbf{x}),
    \end{aligned}
    \end{equation}
	with strict inequality whenever
	$\kappa(\mathbf{x},\mathbf{z})$
	is non-constant on the support of
	$p_\theta(\cdot\mid\mathbf{x})$. Consequently, the corresponding lower bounds on expected accepted length satisfy
	\begin{equation}
		\label{eq:A_bound_gap_main}
		\ell\,
		\exp\!\Big(
		\mathcal{L}_{\mathrm{VSD}}(q_{\psi}^\star;\mathbf{x})
		\Big)
		\;\ge\;
		\ell\,
		\exp\!\Big(
		\mathcal{L}_{\mathrm{VSD}}(q_{\mathrm{KL}};\mathbf{x})
		\Big).
	\end{equation}
\end{theorem}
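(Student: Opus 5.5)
The plan is to use the standard ELBO gap identity and treat $q_\psi$ as an unrestricted distribution over paths, with $p_\theta(\cdot\mid\mathbf{x})$ and $\kappa$ held fixed. I will assume $Z_\theta(\mathbf{x})>0$. For any $q_\psi$ whose support is contained in that of $p_\theta(\cdot\mid\mathbf{x})\kappa(\mathbf{x},\cdot)$, expanding the definition in Eqn.~\eqref{eq:elbo_vi} gives
\begin{equation*}
\mathcal{L}_{\mathrm{VSD}}(q_\psi;\mathbf{x}) = \log Z_\theta(\mathbf{x}) - \mathbb{D}_{\mathrm{KL}}\big(q_\psi(\cdot\mid\mathbf{x})\,\|\,p_\theta(\cdot\mid\mathbf{x},\rho=1)\big),
\end{equation*}
where $p_\theta(\cdot\mid\mathbf{x},\rho=1)$ is the posterior in Eqn.~\eqref{eq:posterior_valid_vi}. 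To verify this, substitute $\log p_\theta(\mathbf{z}\mid\mathbf{x})+\log\kappa(\mathbf{x},\mathbf{z}) = \log Z_\theta(\mathbf{x}) + \log p_\theta(\mathbf{z}\mid\mathbf{x},\rho=1)$ into the integrand. If $q_\psi$ puts mass where $\kappa=0$, then $\mathcal{L}_{\mathrm{VSD}}=-\infty$, so the identity still holds in the extended sense. Since KL divergence is nonnegative and vanishes only at equality, the maximizer is $q_\psi^\star = p_\theta(\cdot\mid\mathbf{x},\rho=1)$, and $\mathcal{L}_{\mathrm{VSD}}(q_\psi^\star;\mathbf{x}) = \log Z_\theta(\mathbf{x})$. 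This is the first equality.

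For the last equality, I plug $q_{\mathrm{KL}} = p_\theta(\cdot\mid\mathbf{x})$ into Eqn.~\eqref{eq:elbo_vi}. The KL term vanishes, leaving $\mathbb{E}_{p_\theta}[\log\kappa(\mathbf{x},\mathbf{z})]$.

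The middle inequality is Jensen's inequality applied to the concave $\log$:
\begin{equation*}
\log Z_\theta(\mathbf{x}) = \log \mathbb{E}_{p_\theta}[\kappa(\mathbf{x},\mathbf{z})] \ge \mathbb{E}_{p_\theta}[\log\kappa(\mathbf{x},\mathbf{z})].
\end{equation*}
Alternatively, it follows from the maximality of $q_\psi^\star$ over all $q$, which includes $q_{\mathrm{KL}}$.

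The step needing the most care is strictness. Strict concavity of $\log$ gives strict Jensen whenever $\kappa$ is non-constant on the support of $p_\theta(\cdot\mid\mathbf{x})$. I will treat the degenerate case separately: if $\kappa=0$ on a set of positive $p_\theta$-mass, the right-hand side is $-\infty$ while $\log Z_\theta(\mathbf{x})$ is finite, so the inequality is trivially strict. Equivalently, strictness holds because $q_{\mathrm{KL}}\neq p_\theta(\cdot\mid\mathbf{x},\rho=1)$ exactly when $\kappa$ is non-constant on that support, which makes the KL gap positive.

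Finally, Eqn.~\eqref{eq:A_bound_gap_main} follows because $t\mapsto \ell e^{t}$ is monotone increasing, with strictness inherited from the previous step. One caveat I would note is that the argument assumes the draft family is rich enough to contain the posterior.
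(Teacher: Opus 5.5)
Your proof is correct and follows essentially the same route as the paper: substitute $q_{\mathrm{KL}}=p_\theta(\cdot\mid\mathbf{x})$ so the KL term vanishes, apply Jensen to $\log\mathbb{E}_{p_\theta}[\kappa(\mathbf{x},\mathbf{z})]$ (strict by strict concavity of $\log$), use tightness of the ELBO at the optimum, and conclude by monotonicity of $t\mapsto \ell e^{t}$. The differences are that you actually prove the tightness step through the identity $\mathcal{L}_{\mathrm{VSD}}(q_\psi;\mathbf{x})=\log Z_\theta(\mathbf{x})-\mathbb{D}_{\mathrm{KL}}\big(q_\psi(\cdot\mid\mathbf{x})\,\|\,p_\theta(\cdot\mid\mathbf{x},\rho=1)\big)$ with $\kappa$ held fixed (the paper only asserts tightness here and carries out this computation for the step-level $\kappa_t$ in Theorem~\ref{thm:variational_identity2}), and that you make explicit the $\kappa=0$ case and the assumption that the draft family contains the posterior, both of which the paper leaves implicit.
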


The proof is provided in Appendix~\ref{sec:compare_kl_only}. 
Theorem~\ref{thm:better_than_kl_only} establishes that VSD is strictly more effective than conventional KL-only draft training whenever different draft paths have non-uniform verification probabilities. Standard draft training minimizes only
$\mathbb{D}_{\mathrm{KL}}(q_\psi\|p_\theta)$,
which merely aligns the draft model with the target distribution and ignores whether sampled paths are likely to survive verification. In contrast, VSD additionally optimizes
$\mathbb{E}_{q_\psi}[\log\kappa(\mathbf{x},\mathbf{z})]$,
explicitly favoring paths with higher verification validity probability. As a result, VSD achieves a strictly tighter lower bound on the expected accepted length, which is related to speculative decoding speedup. This formalizes the intuition that speculative decoding should prioritize both target-likely draft paths and draft paths that are more likely to survive verification.

\textbf{Step-level Variational Optimality in VSD.}
We finally characterize the optimal draft distribution for each EM update in VSD. At the $t$-th EM update, the current draft model $q_{\psi^{(t)}}$ is used to compute the ELBO terms in the E-step. These terms are treated as fixed when optimizing the candidate draft distribution $q_\psi$ in the M-step.

\begin{theorem}
	\label{thm:variational_identity2}
	Fix a prefix $\mathbf{x}$ and consider the $t$-th EM update. Let
	$q_{\psi^{(t)}}(\cdot\mid\mathbf{x})$ denote the current draft distribution used in the E-step. Define the step-level path validity probability
	\begin{equation}
		\label{eq:step_kappa_main}
		\kappa_t(\mathbf{x},\mathbf{z})
		=
		\prod_{i=1}^{\ell}
		\min\!\left(
		1,\,
		\frac{
		p_\theta(z_i\mid\mathbf{x},\mathbf{z}_{<i})
		}{
		q_{\psi^{(t)}}(z_i\mid\mathbf{x},\mathbf{z}_{<i})
		}
		\right).
	\end{equation}
	Then the corresponding step-level valid-path posterior is
	\begin{equation}
		\label{eq:step_posterior_main}
		p_{\theta,t}(\mathbf{z}\mid\mathbf{x},\rho=1)
		=
		\frac{
		p_\theta(\mathbf{z}\mid\mathbf{x})\,
		\kappa_t(\mathbf{x},\mathbf{z})
		}{
		Z_{\theta,t}(\mathbf{x})
		},
	\end{equation}
	where
	\begin{equation}
		\label{eq:step_Z_main}
		Z_{\theta,t}(\mathbf{x})
		=
		\sum_{\mathbf{z}}
		p_\theta(\mathbf{z}\mid\mathbf{x})
		\kappa_t(\mathbf{x},\mathbf{z}).
	\end{equation}
	For any candidate distribution $q_\psi(\cdot\mid\mathbf{x})$ over draft paths, which serves as the optimization variable in the M-step, define the step-level VSD surrogate
	\begin{equation}
    \label{eq:step_vsd_main}
    \begin{aligned}
    \mathcal{L}_{\mathrm{VSD}}^{(t)}(q_\psi;\mathbf{x})
    &=
    \mathbb{E}_{\mathbf{z}\sim q_\psi(\cdot\mid\mathbf{x})}
    \!\left[
    \log \kappa_t(\mathbf{x},\mathbf{z})
    \right] \\
    &\quad
    -
    \mathbb{D}_{\mathrm{KL}}
    \!\left(
    q_\psi(\cdot\mid\mathbf{x})
    \,\|\,
    p_\theta(\cdot\mid\mathbf{x})
    \right).
    \end{aligned}
    \end{equation}
	We have the unique maximizer of $\mathcal{L}_{\mathrm{VSD}}^{(t)}(q_\psi;\mathbf{x})$ is
	\begin{equation}
		\label{eq:step_opt_qstar_main}
		q_t^\star(\cdot\mid\mathbf{x})
		=
		p_{\theta,t}(\cdot\mid\mathbf{x},\rho=1).
	\end{equation}
\end{theorem}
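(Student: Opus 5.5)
The approach is the standard Gibbs variational identity: rewrite the step-level surrogate as a negative KL divergence to the step-level posterior plus a constant, then invoke Gibbs' inequality. The key point is that in the M-step $\kappa_t$ is computed from the frozen draft $q_{\psi^{(t)}}$. It therefore does not depend on the optimization variable $q_\psi$, and the surrogate~\eqref{eq:step_vsd_main} is an entropy-regularized linear functional of $q_\psi$. This is what distinguishes the statement from the non-stationary objective~\eqref{eq:vsd_eq5}, where $\kappa$ itself depends on $q_\psi$.

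\textbf{Step 1: well-posedness.} Since $\kappa_t\in[0,1]$, the constant $Z_{\theta,t}(\mathbf{x})$ satisfies $Z_{\theta,t}(\mathbf{x})\le 1$. It is also strictly positive, because $p_\theta(z_i\mid\cdot)>0$ forces $\alpha_i>0$, so $\kappa_t>0$ on the support of $p_\theta$. Hence the posterior~\eqref{eq:step_posterior_main} is a well-defined distribution; this is just Bayes' rule applied to the model~\eqref{eq:gen_prior_polished}--\eqref{eq:gen_verif_polished} with $\kappa$ replaced by $\kappa_t$.

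\textbf{Step 2: algebraic identity.} For any $q_\psi$ with $q_\psi\ll p_\theta$, I will expand $\log\kappa_t+\log p_\theta-\log q_\psi$ and substitute $p_\theta\kappa_t = Z_{\theta,t}\,p_{\theta,t}(\cdot\mid\mathbf{x},\rho=1)$. This gives
\begin{equation*}
\mathcal{L}^{(t)}_{\mathrm{VSD}}(q_\psi;\mathbf{x})=\log Z_{\theta,t}(\mathbf{x})-\mathbb{D}_{\mathrm{KL}}\big(q_\psi(\cdot\mid\mathbf{x})\,\|\,p_{\theta,t}(\cdot\mid\mathbf{x},\rho=1)\big).
\end{equation*}
If $q_\psi$ is not absolutely continuous with respect to $p_\theta$, the KL term in~\eqref{eq:step_vsd_main} is $+\infty$ and the surrogate is $-\infty$, so such $q_\psi$ cannot be maximizers. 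The same convention covers paths with $\kappa_t=0$: there $\log\kappa_t=-\infty$, and both sides equal $-\infty$.

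\textbf{Step 3: conclusion.} By Gibbs' inequality, the KL term is nonnegative and vanishes if and only if $q_\psi=p_{\theta,t}(\cdot\mid\mathbf{x},\rho=1)$. Hence the maximum value is $\log Z_{\theta,t}(\mathbf{x})$, attained exactly at $q_t^\star$. Uniqueness follows from the strictness of Gibbs' inequality, equivalently from the strict convexity of the KL divergence in its first argument.

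\textbf{Main obstacle.} There is no genuine analytic difficulty; the care needed is interpretive. First, $q_\psi$ must be treated as ranging over all distributions on paths, not over a parametric family; if the family cannot represent $q_t^\star$, the claim becomes one about the nonparametric optimum. Second, $\kappa_t$ must be held fixed and not differentiated through $q_\psi$. Third, the extended-real conventions for zero-probability paths must be handled as in Step 2. I would state these assumptions explicitly at the start of the proof.
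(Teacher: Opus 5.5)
Your proposal is correct and follows essentially the same route as the paper, which expands $\mathbb{D}_{\mathrm{KL}}\big(q_\psi\,\|\,p_{\theta,t}(\cdot\mid\mathbf{x},\rho=1)\big)$, substitutes $\log p_{\theta,t}(\mathbf{z}\mid\mathbf{x},\rho=1)=\log p_\theta(\mathbf{z}\mid\mathbf{x})+\log\kappa_t(\mathbf{x},\mathbf{z})-\log Z_{\theta,t}(\mathbf{x})$, rearranges to $\mathcal{L}^{(t)}_{\mathrm{VSD}}(q_\psi;\mathbf{x})=\log Z_{\theta,t}(\mathbf{x})-\mathbb{D}_{\mathrm{KL}}\big(q_\psi\,\|\,p_{\theta,t}(\cdot\mid\mathbf{x},\rho=1)\big)$, and concludes from nonnegativity of KL with equality iff the two distributions coincide. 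The only difference is bookkeeping: the paper simply assumes $q_{\psi^{(t)}}$ has full support and $\kappa_t>0$ on the relevant support, whereas you verify $Z_{\theta,t}(\mathbf{x})>0$ and treat the extended-real cases ($q_\psi\not\ll p_\theta$, $\kappa_t=0$) explicitly.
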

The proof is provided in Appendix~\ref{asfdsfs22}. 
Theorem~\ref{thm:variational_identity2} gives a step-level variational characterization of VSD. At the $t$-th EM update, the ELBO terms induced by the current draft model $q_{\psi^{(t)}}$ are fixed during the M-step.Therefore, the posterior $p_{\theta,t}(\mathbf{z}\mid\mathbf{x},\rho=1)$ is a fixed step-level target, and the M-step surrogate has a well-defined global optimum. This result shows that VSD can be viewed as an iterative procedure that repeatedly constructs a verification-aware posterior and updates the draft model toward this posterior. However, KL-only draft training~\citep{hass,griffin,eagle3} only aligns the draft model with the unconditional target distribution $p_\theta(\mathbf{z}\mid\mathbf{x})$ without explicitly accounting for whether draft paths are likely to survive during the verification.
\section{Experiment}
\label{sec:exp}
\noindent\textbf{Models \& Tasks.} We evaluate our method across a diverse set of LLMs, including LLaMA-3.1-Instruct-8B, LLaMA-3.3-Instruct-70B \citep{llama3}, Vicuna-1.3-13B \citep{vicuna}, and DeepSeek-R1-Distill-LLaMA-8B \citep{deepseekr1}. These experiments are conducted on a single RTX PRO 6000 GPU, with the exception of the LLaMA-3.3-70B variant, which requires two GPUs. Performance is assessed on three widely adopted benchmarks: MT-Bench \citep{mtbench} for multi-turn dialogue, HumanEval \citep{humaneval} for code generation, and GSM8K \citep{gsm8k} for mathematical reasoning. Furthermore, we extend our evaluation to MLLMs, assessing VSD on the open-source LLaVA-1.5 family (7B and 13B) \citep{llava}. These experiments are conducted on an L-40S GPU across six diverse multimodal benchmarks, including visual question answering (SQA Image \citep{sqa}, VQAv2 \citep{vqav2}, AI2D \citep{ai2d}), document/chart understanding (ChartQA \citep{chartqa}, TextVQA \citep{textvqa}), and hallucination evaluation (Hallusion \citep{hallusion}).

\noindent\textbf{Metrics.} We  follow priors \citep{eagle3, griffin}, and test all methods using a decoding batch size of 1 under temperatures $T \in \{0, 1\}$. Since VSD preserves the output distribution of the target model and is lossless, we focus on efficiency metrics: (i) \textbf{Speedup Ratio ($SR$)}, defined as the wall-clock time acceleration relative to vanilla decoding, and (ii) \textbf{Acceptance Length ($\tau$)}, representing the average number of tokens accepted per draft-verification cycle.

\noindent\textbf{Baselines.} Vanilla autoregressive decoding is used as the baseline (speedup ratio = $1.00\times$). For comparison, we include recent state-of-the-art speculative decoding methods: SPS \citep{sd}, PLD \citep{pld}, Lookahead \citep{lookahead}, Medusa \citep{medusa}, EAGLE \citep{eagle}, EAGLE-2 \citep{eagle2}, HASS \citep{hass}, GRIFFIN \citep{griffin}, EAGLE-3 \citep{eagle3}, MSD \citep{msd}, and ViSpec \citep{vispec}.  For all baselines, their official models are evaluated on the same hardware (RTX PRO 6000 or L-40S) for benchmarking. Note that our measured $\tau$ values for these baselines closely align with their officially reported results.

\noindent\textbf{Implementation.} For fairness,  when integrated with baseline (e.g., EAGLE-3, MSD), VSD augments the original training objective by adding the first term in Eqn.~\eqref{eq:elbo_vi} to train the draft model. The original training loss (e.g., standard cross-entropy~\citep{eagle3} or top-$k$ KL loss~\cite{hass}) can be seen as the approximation to the KL term in the variational objective. VSD does not change other factors like the size or network design of draft model. By default, VSD initializes its draft model from the one provided by EAGLE-3, MSD, and ViSpec. To assess compatibility, we also experiment with draft models trained by other approaches (see Tab.~\ref{tab:mainexp1}). For LLM, we train DeepSeek-R1-Distill-LLaMA-8B on the OpenThoughts-114k-math dataset~\citep{openthoughts}, while the remaining draft models are trained using the ShareGPT dataset~\citep{vicuna} following prior work. For MLLM, we train all draft models with ShareGPT~\citep{vicuna} and LLaVA-Instruct-150K~\citep{llava}. Detailed training configurations are provided in Appendices~\ref{app:imp_llm} and~\ref{app:imp_mllm}. 

\subsection{Main Results}
\begin{table*}[t!]
\caption{Comparison of speedup ratio ($SR$) and acceptance length ($\tau$) on standard LLM benchmarks with temperature $T \in \{0, 1\}$. The subscripts denote the relative improvement compared to the corresponding baseline. For example, at $T=0$, EAGLE-3+VSD on LLaMA-3.1 Instruct 8B achieves the average $SR$ of 4.22 with an additional +6.7\% gain over the EAGLE-3 value of 3.95.} 
\label{tab:mainexp}
\centering
\setlength{\tabcolsep}{0.3em}
\scalebox{0.7}{
    \begin{tabular}{cc||cccccc|c@{\hspace{2.5em}}c@{\hspace{2.5em}}||cccccc|c@{\hspace{2.5em}}c@{\hspace{2.5em}}}
        \toprule
        &  & \multicolumn{8}{c||}{\textbf{Temperature = 0}} & \multicolumn{8}{c}{\textbf{Temperature = 1}} \\
        \cmidrule{3-18}
        \textbf{Model} & \textbf{Method} & \multicolumn{2}{c}{\textbf{MT-bench}} & \multicolumn{2}{c}{\textbf{HumanEval}} & \multicolumn{2}{c|}{\textbf{GSM8K}} & \multicolumn{2}{c||}{\textbf{Average}} & \multicolumn{2}{c}{\textbf{MT-bench}} & \multicolumn{2}{c}{\textbf{HumanEval}} & \multicolumn{2}{c|}{\textbf{GSM8K}} & \multicolumn{2}{c}{\textbf{Average}} \\
        \cmidrule{3-18}
        & & $SR \uparrow$ & $\tau \uparrow$  & $SR \uparrow$ &  $\tau \uparrow$  & $SR \uparrow$ &  $\tau \uparrow$ & $SR \uparrow$ &  $\tau \uparrow$ & $SR \uparrow$ &  $\tau \uparrow$ & $SR \uparrow$ &  $\tau \uparrow$ & $SR \uparrow$ &  $\tau \uparrow$ & $SR \uparrow$ &  $\tau \uparrow$ \\
        \midrule
        
        \multirow{7.5}{*}{\rotatebox{90}{\shortstack{\textbf{LLaMA-3.1}\\\textbf{Instruct 8B}}}}
        & PLD & 1.69 & 1.71 & 1.90 & 1.80 & 1.94 & 1.97 & 1.84 & 1.83 & \multicolumn{8}{c}{\multirow{2}{*}{N/A, since the acceptance conditions are relaxed}} \\
        & Lookahead & 1.82 & 1.80 & 2.03 & 1.88 & 2.05 & 2.06 & 1.97 & 1.91 \\
        & EAGLE & 2.03 & 3.26 & 2.84 & 3.49 & 2.27 & 3.30 & 2.38 & 3.35 & 1.62 & 2.18 & 2.23 & 3.15 & 2.08 & 2.74 & 1.98 & 2.69 \\
        & EAGLE-2 & 2.93 & 4.37 & 3.88 & 5.06 & 3.14 & 4.53 & 3.32 & 4.65 & 2.01 & 2.83 & 2.99 & 4.49 & 2.56 & 3.51 & 2.52 & 3.61 \\
        & GRIFFIN & 3.41 & 5.10 & 4.35 & 6.32 & 3.48 & 5.55 & 3.75 & 5.66 & 2.23 & 3.53 & 3.67 & 5.52 & 2.83 & 4.41 & 2.91 & 4.48 \\
        \cmidrule{2-18}
        & EAGLE-3 & 3.79 & 6.31 & 4.29 & 6.87 & 3.77 & 6.47 & 3.95 & 6.55 & 2.32 & 4.10 & 3.48 & 5.86 & 3.10 & 5.07 & 2.97 & 5.01 \\
        & \textbf{EAGLE-3+VSD} & \textbf{4.05} & \textbf{6.79} & \textbf{4.63} & \textbf{7.27} & \textbf{3.96} & \textbf{6.93} & 

        \textbf{4.22}\rlap{\textsubscript{\scriptsize \textbf{+6.7\%}}} & \textbf{7.00}\rlap{\textsubscript{\scriptsize \textbf{+6.8\%}}} & 
        \textbf{2.41} & \textbf{4.23} & \textbf{3.80} & \textbf{6.15} & \textbf{3.31} & \textbf{5.39} & 
        \textbf{3.18}\rlap{\textsubscript{\scriptsize \textbf{+7.1\%}}} & \textbf{5.26}\rlap{\textsubscript{\scriptsize \textbf{+4.9\%}}}  \\
        \midrule
        
        \multirow{6.5}{*}{\rotatebox{90}{\shortstack{\textbf{Vicuna-1.3}\\\textbf{13B}}}}
        & SPS & 1.42 & 2.29 & 1.58 & 2.53 & 1.25 & 1.95 & 1.42 & 2.26 & 1.21 & 1.85 & 1.29 & 1.96 & 1.05 & 1.77 & 1.18 & 1.86 \\
        & Hydra  & 1.91 & 3.32 & 2.30 & 3.78 & 1.96 & 3.42 & 2.06 & 3.51 & \multicolumn{8}{c}{\multirow{1}{*}{N/A, since the acceptance conditions are relaxed}} \\
        & EAGLE & 2.11 & 3.77 & 2.44 & 4.21 & 2.03 & 3.65 & 2.19 & 3.87 & 1.65 & 3.15 & 1.90 & 3.52 & 1.69 & 3.45 & 1.75 & 3.37 \\
        & EAGLE-2 & 2.81 & 4.88 & 3.52 & 5.45 & 2.81 & 4.73 & 3.05 & 5.02 & 2.65 & 4.43 & 2.91 & 4.92 & 2.23 & 4.43 & 2.60 & 4.59 \\
        \cmidrule{2-18}
        & EAGLE-3 & 3.59 & 6.73 & 4.22 & 7.47 & 3.59 & 6.52 & 3.80 & 6.90 & 3.08 & 5.77 & 3.51 & 6.45 & 2.98 & 5.92 & 3.19 & 6.04 \\
        & \textbf{EAGLE-3+VSD} & \textbf{4.08} & \textbf{7.11} & \textbf{4.74} & \textbf{8.10} & \textbf{4.02} & \textbf{7.09} & 
        \textbf{4.28}\rlap{\textsubscript{\scriptsize \textbf{+12.6\%}}} & \textbf{7.43}\rlap{\textsubscript{\scriptsize \textbf{+7.7\%}}} & 
        \textbf{3.26} & \textbf{6.08} & \textbf{3.73} & \textbf{6.74} & \textbf{3.26} & \textbf{6.23} & 
        \textbf{3.42}\rlap{\textsubscript{\scriptsize \textbf{+7.1\%}}} & \textbf{6.35}\rlap{\textsubscript{\scriptsize \textbf{+5.0\%}}} \\
        \midrule
        
        \multirow{6.5}{*}{\rotatebox{90}{\shortstack{\textbf{DeepSeek-R1}\\\textbf{Distill-LLaMA} \\\textbf{8B}}}}
        & PLD & 1.41 & 1.48 & 1.72 & 1.71 & 1.66 & 1.65 & 1.60 & 1.61 & \multicolumn{8}{c}{\multirow{2}{*}{N/A, since the acceptance conditions are relaxed}} \\
        & Lookahead & 1.64 & 1.71 & 1.89 & 1.83 & 1.87 & 1.84 & 1.80 & 1.79 \\
        & EAGLE-2 & 2.23 & 4.02 & 3.05 & 4.35 & 3.02 & 4.50 & 2.77 & 4.29 & 2.23 & 3.65 & 2.66 & 3.95 & 2.69 & 4.39 & 2.53 & 3.99 \\
        & GRIFFIN & 2.89 & 4.44 & 3.62 & 5.57 & 3.89 & 6.03 & 3.47 & 5.35 & 2.42 & 4.06 & 2.99 & 4.78 & 3.57 & 5.71 & 2.99 & 4.85 \\
        \cmidrule{2-18}
        & EAGLE-3 & 3.56 & 5.57 & 4.09 & 6.26 & 4.31 & 6.72 & 3.99 & 6.18 & 2.77 & 4.70 & 3.34 & 5.24 & 4.01 & 6.40 & 3.37 & 5.44 \\
        & \textbf{EAGLE-3+VSD} & \textbf{4.07} & \textbf{5.93} & \textbf{4.67} & \textbf{6.92} & \textbf{4.61} & \textbf{7.29} & 
        \textbf{4.45}\rlap{\textsubscript{\scriptsize \textbf{+11.6\%}}} & \textbf{6.71}\rlap{\textsubscript{\scriptsize \textbf{+8.6\%}}} & 
        \textbf{3.03} & \textbf{5.11} & \textbf{3.84} & \textbf{5.63} & \textbf{4.18} & \textbf{6.79} & 
        \textbf{3.68}\rlap{\textsubscript{\scriptsize \textbf{+9.2\%}}} & \textbf{5.84}\rlap{\textsubscript{\scriptsize \textbf{+7.4\%}}} \\
        \midrule

        \multirow{5.5}{*}{\rotatebox{90}{\shortstack{\textbf{LLaMA-3.3}\\\textbf{Instruct 70B}}}}
        & PLD & 1.37 & 1.59 & 1.53 & 1.74 & 1.51 & 1.72 & 1.47 & 1.68 & \multicolumn{8}{c}{\multirow{2}{*}{N/A, since the acceptance conditions are relaxed}} \\
        & Lookahead & 1.48 & 1.73 & 1.65 & 1.85 & 1.71 & 1.92 & 1.61 & 1.83 \\
        & EAGLE-2 & 2.97 & 3.92 & 3.72 & 4.59 & 3.17 & 4.30 & 3.29 & 4.27 & 3.18 & 3.98 & 3.39 & 4.48 & 2.97 & 4.25 & 3.18 & 4.24 \\
        \cmidrule{2-18}
        & EAGLE-3 & 3.56 & 5.62 & 4.22 & 6.48 & 3.94 & 6.28 & 3.91 & 6.13 & 3.52 & 5.47 & 3.87 & 6.16 & 3.73 & 5.81 & 3.71 & 5.82 \\
        & \textbf{EAGLE-3+VSD} & \textbf{3.81} & \textbf{6.04} & \textbf{4.51} & \textbf{6.94} & \textbf{4.27} & \textbf{6.71} & 
        \textbf{4.20}\rlap{\textsubscript{\scriptsize \textbf{+7.5\%}}} & \textbf{6.56}\rlap{\textsubscript{\scriptsize \textbf{+7.1\%}}} & 
        \textbf{3.73} & \textbf{5.81} & \textbf{4.14} & \textbf{6.67} & \textbf{3.88} & \textbf{6.33} & 
        \textbf{3.92}\rlap{\textsubscript{\scriptsize \textbf{+5.7\%}}} & \textbf{6.27}\rlap{\textsubscript{\scriptsize \textbf{+7.8\%}}} \\
        \bottomrule
    \end{tabular}}
\end{table*}
\begin{table*}[t!]
\centering
\caption{Comparison of speedup ratio $SR$ and acceptance length $\tau$ on standard MLLM benchmarks with temperature $T \in \{0, 1\}$. The subscripts denote the relative improvement compared to the corresponding baseline. For example, at $T=0$, MSD+VSD on LLaVA-1.5
7B achieves the average $SR$ of 2.45 with an additional +7.9\% gain over the MSD value of 2.27.}
\setlength{\tabcolsep}{0.5em}
\scalebox{0.7}{
\begin{tabular}{c | c w{c}{6em} || cccccccccccc|c @{\hspace{2.5em}} c@{\hspace{2.5em}}}
\toprule

\multirow{2}{*}{} 
& \multirow{2}{*}{\textbf{Model}} 
& \multirow{2}{*}{\textbf{Method}}
& \multicolumn{2}{c}{\textbf{VQAv2}}
& \multicolumn{2}{c}{\textbf{AI2D}}
& \multicolumn{2}{c}{\textbf{SQA Image}}
& \multicolumn{2}{c}{\textbf{ChartQA}}
& \multicolumn{2}{c}{\textbf{TextVQA}}
& \multicolumn{2}{c}{\textbf{Hallusion}}
& \multicolumn{2}{c}{\textbf{Average}} \\
\cmidrule(lr){4-17} 
& 
& 
& $SR \uparrow$ & $\tau \uparrow$
& $SR \uparrow$ & $\tau \uparrow$
& $SR \uparrow$ & $\tau \uparrow$
& $SR \uparrow$ & $\tau \uparrow$
& $SR \uparrow$ & $\tau \uparrow$
& $SR \uparrow$ & $\tau \uparrow$
& $SR \uparrow$ & $\tau \uparrow$ \\
\midrule

% Temperature = 0 区块
\multirow{14.5}{*}{\rotatebox{90}{\textbf{Temperature = 0}}}
& \multirow{8}{*}{\shortstack{\textbf{LLaVA-1.5}\\\textbf{7B}}}
& Lookahead  & 1.43 & 1.35 & 1.60 & 1.46 & 1.51 & 1.45 & 1.33 & 1.43 & 1.32 & 1.31 & 1.59 & 1.51 & 1.46 & 1.42 \\
& & Medusa  & 1.63 & 1.52 & 1.71 & 1.69 & 1.59 & 1.57 & 1.50 & 1.62 & 1.49 & 1.51 & 1.74 & 1.99 & 1.61 & 1.65 \\
& & EAGLE-2 & 2.37 & 4.53 & 1.98 & 3.35 & 1.79 & 3.41 & 1.81 & 3.25 & 1.84 & 3.62 & 2.04 & 3.55 & 1.97 & 3.62 \\
\cmidrule(lr){3-17} 
& & MSD  & 2.47 & 4.99 & 2.23 & 4.20 & 2.08 & 4.24 & 2.21 & 4.14 & 2.12 & 4.18 & 2.51 & 4.60 & 2.27 & 4.40 \\
& & \textbf{MSD+VSD} &
\textbf{2.69} & \textbf{5.10} &
\textbf{2.39} & \textbf{4.48} &
\textbf{2.22} & \textbf{4.53} &
\textbf{2.46} & \textbf{4.58} &
\textbf{2.29} & \textbf{4.41} &
\textbf{2.67} & \textbf{4.93} &
\textbf{2.45}\rlap{\textsubscript{\scriptsize\textbf{+7.9\%}}} & \textbf{4.67}\rlap{\textsubscript{\scriptsize\textbf{+6.3\%}}}
\\
\cmidrule(lr){3-17} 
& & ViSpec  & 2.57 & 5.08 & 2.30 & 4.36 & 2.19 & 4.44 & 2.32 & 4.30 & 2.25 & 4.31 & 2.61 & 4.73 & 2.37 & 4.54 \\
& & \textbf{ViSpec+VSD} &
\textbf{2.78} & \textbf{5.19} &
\textbf{2.47} & \textbf{4.69} &
\textbf{2.32} & \textbf{4.70} &
\textbf{2.54} & \textbf{4.69} &
\textbf{2.40} & \textbf{4.52} &
\textbf{2.77} & \textbf{5.03} &
\textbf{2.54}\rlap{\textsubscript{\scriptsize\textbf{+7.2\%}}} & \textbf{4.80}\rlap{\textsubscript{\scriptsize\textbf{+5.9\%}}}\\
\cmidrule{2-17} 
& \multirow{8}{*}{\shortstack{\textbf{LLaVA-1.5}\\\textbf{13B}}}
& Lookahead  & 1.42 & 1.30 & 1.32 & 1.40 & 1.32 & 1.41 & 1.66 & 1.46 & 1.38 & 1.33 & 1.43 & 1.47 & 1.42 & 1.40 \\
& & Medusa  & 1.50 & 1.41 & 1.47 & 1.61 & 1.39 & 1.50 & 1.75 & 1.56 & 1.51 & 1.58 & 1.62 & 2.01 & 1.54 & 1.61 \\
& & EAGLE-2 & 2.36 & 4.29 & 2.04 & 3.38 & 1.76 & 3.16 & 2.21 & 3.18 & 1.81 & 3.35 & 2.09 & 3.41 & 2.05 & 3.46 \\
\cmidrule(lr){3-17} 
& & MSD  & 2.44 & 4.48 & 2.35 & 4.08 & 2.10 & 3.93 & 2.87 & 3.97 & 2.25 & 4.07 & 2.53 & 4.35 & 2.43 & 4.15 \\
& & \textbf{MSD+VSD}  &
\textbf{2.65} & \textbf{4.95} &
\textbf{2.60} & \textbf{4.50} &
\textbf{2.30} & \textbf{4.56} &
\textbf{3.21} & \textbf{4.50} &
\textbf{2.49} & \textbf{4.54} &
\textbf{2.77} & \textbf{4.79} &
\textbf{2.68}\rlap{\textsubscript{\scriptsize\textbf{+10.1\%}}} & \textbf{4.64}\rlap{\textsubscript{\scriptsize\textbf{+11.9\%}}} \\
\cmidrule(lr){3-17} 
& & ViSpec  & 2.55 & 4.66 & 2.43 & 4.22 & 2.21 & 4.29 & 3.01 & 4.24 & 2.34 & 4.21 & 2.63 & 4.51 & 2.53 & 4.36 \\
& & \textbf{ViSpec+VSD} &
\textbf{2.73} & \textbf{5.07} &
\textbf{2.66} & \textbf{4.61} &
\textbf{2.35} & \textbf{4.62} &
\textbf{3.29} & \textbf{4.60} &
\textbf{2.55} & \textbf{4.66} &
\textbf{2.89} & \textbf{4.87} &
\textbf{2.74}\rlap{\textsubscript{\scriptsize\textbf{+8.5\%}}} & \textbf{4.74}\rlap{\textsubscript{\scriptsize\textbf{+8.8\%}}} \\
\midrule

% Temperature = 1 区块
\multirow{14.5}{*}{\rotatebox{90}{\textbf{Temperature = 1}}}
& \multirow{8}{*}{\shortstack{\textbf{LLaVA-1.5}\\\textbf{7B}}}
& Lookahead  & 1.34 & 1.24 & 1.12 & 1.26 & 1.24 & 1.30 & 1.28 & 1.27 & 1.08 & 1.21 & 1.31 & 1.38 & 1.23 & 1.28 \\
& & Medusa  & 1.51 & 1.93 & 1.19 & 1.43 & 1.38 & 1.87 & 1.36 & 1.82 & 1.12 & 1.43 & 1.47 & 1.88 & 1.34 & 1.73 \\
& & EAGLE-2 & 1.85 & 3.28 & 1.24 & 2.61 & 1.48 & 2.71 & 1.52 & 2.57 & 1.37 & 2.72 & 1.52 & 2.82 & 1.50 & 2.79 \\
\cmidrule(lr){3-17} 
& & MSD  & 2.06 & 3.59 & 1.49 & 3.13 & 1.77 & 3.26 & 1.81 & 3.02 & 1.46 & 2.93 & 1.80 & 3.39 & 1.74 & 3.22 \\
& & \textbf{MSD+VSD} &
\textbf{2.12} & \textbf{3.64} &
\textbf{1.69} & \textbf{3.58} &
\textbf{2.01} & \textbf{3.39} &
\textbf{1.84} & \textbf{3.20} &
\textbf{1.76} & \textbf{3.06} &
\textbf{1.86} & \textbf{3.51} &
\textbf{1.88}\rlap{\textsubscript{\scriptsize\textbf{+8.2\%}}} & \textbf{3.40}\rlap{\textsubscript{\scriptsize\textbf{+5.4\%}}}
\\
\cmidrule(lr){3-17} 
& & ViSpec  & 2.15 & 3.67 & 1.58 & 3.37 & 1.90 & 3.33 & 1.86 & 3.18 & 1.62 & 2.97 & 1.84 & 3.41 & 1.83 & 3.32 \\
& & \textbf{ViSpec+VSD} &
\textbf{2.21} & \textbf{3.79} &
\textbf{1.74} & \textbf{3.65} &
\textbf{2.08} & \textbf{3.46} &
\textbf{1.92} & \textbf{3.29} &
\textbf{1.81} & \textbf{3.11} &
\textbf{1.91} & \textbf{3.62} &
\textbf{1.94}\rlap{\textsubscript{\scriptsize\textbf{+6.5\%}}} & \textbf{3.64}\rlap{\textsubscript{\scriptsize\textbf{+5.0\%}}}
\\
\cmidrule{2-17}
& \multirow{8}{*}{\shortstack{\textbf{LLaVA-1.5}\\\textbf{13B}}}
& Lookahead  & 1.09 & 1.21 & 1.23 & 1.29 & 1.18 & 1.26 & 1.08 & 1.28 & 1.17 & 1.22 & 1.20 & 1.32 & 1.16 & 1.26 \\
& & Medusa  & 1.16 & 1.35 & 1.31 & 1.43 & 1.25 & 1.39 & 1.16 & 1.87 & 1.21 & 1.94 & 1.35 & 1.42 & 1.24 & 1.57\\
& & EAGLE-2 & 1.71 & 3.17 & 1.60 & 2.63 & 1.58 & 2.67 & 1.13 & 2.59 & 1.62 & 2.58 & 1.68 & 2.82 & 1.55 & 2.74 \\
\cmidrule(lr){3-17} 
& & MSD  & 1.81 & 3.44 & 1.81 & 3.08 & 1.77 & 3.21 & 1.39 & 3.06 & 1.92 & 3.01 & 1.96 & 3.41 & 1.78 & 3.20 \\
& & \textbf{MSD+VSD} &
\textbf{2.00} & \textbf{3.54} &
\textbf{1.88} & \textbf{3.27} &
\textbf{1.88} & \textbf{3.51} &
\textbf{1.54} & \textbf{3.33} &
\textbf{2.04} & \textbf{3.21} &
\textbf{2.15} & \textbf{3.66} &
\textbf{1.91}\rlap{\textsubscript{\scriptsize\textbf{+7.8\%}}} & \textbf{3.42}\rlap{\textsubscript{\scriptsize\textbf{+6.7\%}}} \\
\cmidrule(lr){3-17} 
& & ViSpec  & 1.89 & 3.49 & 1.85 & 3.18 & 1.79 & 3.24 & 1.43 & 3.20 & 1.98 & 3.15 & 2.01 & 3.45 & 1.83 & 3.29 \\
& & \textbf{ViSpec+VSD} &
\textbf{2.04} & \textbf{3.61} &
\textbf{1.90} & \textbf{3.29} &
\textbf{1.90} & \textbf{3.54} &
\textbf{1.56} & \textbf{3.35} &
\textbf{2.11} & \textbf{3.29} &
\textbf{2.19} & \textbf{3.74} &
\textbf{1.95}\rlap{\textsubscript{\scriptsize\textbf{+6.8\%}}} & \textbf{3.47}\rlap{\textsubscript{\scriptsize\textbf{+5.6\%}}}
\\
\bottomrule
\end{tabular}}
\label{tab:mllm_exp}
\vspace{-1em}
\end{table*}

\begin{table*}[t!] 
\caption{Comparison of speedup ratio ($SR$) and acceptance length ($\tau$) when respectively using draft models trained by GRIFFIN and HASS as initialization of VSD on LLaMA-3.1 Instruct-8B.} 
\vspace{-0.5em}
\label{tab:mainexp1}
\begin{center}
\setlength{\tabcolsep}{0.6em}
\scalebox{0.7}{
\begin{tabular}{l||cccccc|cc||cccccc|cc}
\toprule
& \multicolumn{8}{c||}{\textbf{Temperature = 0}} & \multicolumn{8}{c}{\textbf{Temperature = 1}} \\
\cmidrule{2-17}
\textbf{Method} 
& \multicolumn{2}{c}{\textbf{MT-bench}} 
& \multicolumn{2}{c}{\textbf{HumanEval}} 
& \multicolumn{2}{c|}{\textbf{GSM8K}} 
& \multicolumn{2}{c||}{\textbf{Average}} 
& \multicolumn{2}{c}{\textbf{MT-bench}} 
& \multicolumn{2}{c}{\textbf{HumanEval}} 
& \multicolumn{2}{c|}{\textbf{GSM8K}} 
& \multicolumn{2}{c}{\textbf{Average}} \\
\cmidrule{2-17}
& $SR \uparrow$ & $\tau \uparrow$  
& $SR \uparrow$ & $\tau \uparrow$  
& $SR \uparrow$ & $\tau \uparrow$ 
& $SR \uparrow$ & $\tau \uparrow$ 
& $SR \uparrow$ & $\tau \uparrow$ 
& $SR \uparrow$ & $\tau \uparrow$ 
& $SR \uparrow$ & $\tau \uparrow$ 
& $SR \uparrow$ & $\tau \uparrow$ \\
\midrule
GRIFFIN 
& 3.51 & 5.39 & 4.42 & 6.27 & 3.61 & 5.79 & 3.85 & 5.82 
& 2.56 & 3.96 & 3.73 & 5.75 & 3.29 & 4.77 & 3.19 & 4.83 \\

\textbf{GRIFFIN+VSD} 
& \textbf{3.85} & \textbf{5.75} 
& \textbf{4.56} & \textbf{6.53} 
& \textbf{3.85} & \textbf{6.11} 
& \textbf{4.09} & \textbf{6.13} 
& \textbf{2.70} & \textbf{4.14} 
& \textbf{3.95} & \textbf{6.10} 
& \textbf{3.48} & \textbf{5.31} 
& \textbf{3.38} & \textbf{5.18}\\

\cmidrule{1-17}

HASS 
& 3.21 & 4.95 & 4.25 & 6.05 & 3.23 & 5.24 & 3.56 & 5.42 
& 2.34 & 3.77 & 3.49 & 5.52 & 3.11 & 4.59 & 2.98 & 4.63 \\

\textbf{HASS+VSD} 
& \textbf{3.60} & \textbf{5.42} 
& \textbf{4.47} & \textbf{6.40} 
& \textbf{3.44} & \textbf{5.47} 
& \textbf{3.84} & \textbf{5.76} 
& \textbf{2.52} & \textbf{3.98} 
& \textbf{3.67} & \textbf{5.84} 
& \textbf{3.33} & \textbf{5.05} 
& \textbf{3.17} & \textbf{4.96}\\
\bottomrule
\end{tabular}}
\end{center}
\end{table*}
\begin{table*}[t!]
\caption{Ablation study of adaptive rejection weighting (ARW), confidence-aware regularization (CAR), and number of latent proposals ($S$) in VSD. We report speedup ratio ($SR$) and acceptance length ($\tau$) on LLaMA-3-Instruct-8B with temperature $T \in \{0, 1\}$.} 
\vspace{-0.5em}
\label{tab:ab1}
\begin{center}
\setlength{\tabcolsep}{0.57em}
\scalebox{0.7}{
    \begin{tabular}{ccc||cccccc|cc||cccccc|cc}
        \toprule
        
        \multicolumn{3}{c||}{\textbf{Settings}} & \multicolumn{8}{c||}{\textbf{Temperature = 0}} & \multicolumn{8}{c}{\textbf{Temperature = 1}} \\
        \cmidrule{1-3} \cmidrule{4-19}
        
        \multirow{2}{*}{\textbf{ARW}} & \multirow{2}{*}{\textbf{CAR}} & \multirow{2}{*}{\textbf{$S$}}
        & \multicolumn{2}{c}{\textbf{MT-bench}} & \multicolumn{2}{c}{\textbf{HumanEval}} & \multicolumn{2}{c|}{\textbf{GSM8K}} & \multicolumn{2}{c||}{\textbf{Average}} 
        & \multicolumn{2}{c}{\textbf{MT-bench}} & \multicolumn{2}{c}{\textbf{HumanEval}} & \multicolumn{2}{c|}{\textbf{GSM8K}} & \multicolumn{2}{c}{\textbf{Average}} \\
        \cmidrule{4-19}
        
        &&& $SR \uparrow$ & $\tau \uparrow$ & $SR \uparrow$ & $\tau \uparrow$ & $SR \uparrow$ & $\tau \uparrow$ & $SR \uparrow$ & $\tau \uparrow$ 
        & $SR \uparrow$ & $\tau \uparrow$ & $SR \uparrow$ & $\tau \uparrow$ & $SR \uparrow$ & $\tau \uparrow$ & $SR \uparrow$ & $\tau \uparrow$ \\
        \midrule
        
        - & - & 10 & 3.81&6.32&4.33&6.90&3.78&6.51&3.97&6.58&2.34&4.12&3.52&5.86&3.12&5.18&2.99&5.05 \\
        \ding{51} & - & 10 & 3.88&6.43&4.41&7.04&3.82&6.56&4.05&6.68&2.34&4.12&3.55&5.91&3.16&5.21&3.02&5.08 \\    
        \ding{51} & \ding{51} & 10 & 3.93&6.52&4.49&7.17&3.86&6.63&4.07&6.77&2.36&4.14&3.57&5.94&3.19&5.23&3.04&5.10 \\

        \midrule     
        - & - & 20 & 3.85&6.42&4.39&6.99&3.83&6.56&4.02&6.66&2.36&4.15&3.63&6.06&3.19&5.24&3.06&5.15 \\       
        \ding{51} & - & 20 & 3.93&6.53&4.48&7.13&3.88&6.68&4.10&6.78&2.38&4.21&3.69&6.09&3.23&5.29&3.10&5.19 \\       
        \ding{51} & \ding{51} & 20 & 3.98&6.62&4.55&7.26&3.91&6.85&4.15&6.91&2.40&4.24&3.73&6.11&3.26&5.31&3.13&5.22 \\
        
        \midrule      
        - & - & 40 & 3.88&6.44&4.43&7.06&3.87&6.67&4.06&6.73&2.37&4.17&3.70&6.09&3.24&5.30&3.10&5.19 \\            
        \ding{51} & - & 40 & 3.99&6.64&4.54&7.17&3.92&6.84&4.15&6.88&2.39&4.18&3.76&6.12&3.28&5.34&3.14&5.21 \\    
        \ding{51} & \ding{51} & 40 & 4.05&6.79&4.63&7.27&3.96&6.93&4.22&7.00&2.41&4.23&3.80&6.15&3.31&5.39&3.18&5.26 \\
        \bottomrule
    \end{tabular}
}
\end{center}
\end{table*}
\begin{table*}[t!]
\caption{Ablation study on batch size using LLaMA-3.1 Instruct 8B. We report speedup ratio ($SR$) and acceptance length ($\tau$) on LLaMA-3-Instruct-8B with temperature $T \in \{0, 1\}$ under batch sizes 4 and 8.}
\label{tab:batchsize_ablation}
\centering
\setlength{\tabcolsep}{0.42em}
\scalebox{0.7}{
\begin{tabular}{cc||cccccc|cc||cccccc|cc}
\toprule
& & \multicolumn{8}{c||}{\textbf{Temperature = 0}} & \multicolumn{8}{c}{\textbf{Temperature = 1}} \\
\cmidrule{3-18}
\textbf{Batch Size} & \textbf{Method} 
& \multicolumn{2}{c}{\textbf{MT-bench}} 
& \multicolumn{2}{c}{\textbf{HumanEval}} 
& \multicolumn{2}{c|}{\textbf{GSM8K}} 
& \multicolumn{2}{c||}{\textbf{Average}} 
& \multicolumn{2}{c}{\textbf{MT-bench}} 
& \multicolumn{2}{c}{\textbf{HumanEval}} 
& \multicolumn{2}{c|}{\textbf{GSM8K}} 
& \multicolumn{2}{c}{\textbf{Average}} \\
\cmidrule{3-18}
& & $SR \uparrow$ & $\tau \uparrow$ 
& $SR \uparrow$ & $\tau \uparrow$ 
& $SR \uparrow$ & $\tau \uparrow$ 
& $SR \uparrow$ & $\tau \uparrow$ 
& $SR \uparrow$ & $\tau \uparrow$ 
& $SR \uparrow$ & $\tau \uparrow$ 
& $SR \uparrow$ & $\tau \uparrow$ 
& $SR \uparrow$ & $\tau \uparrow$ \\
\midrule

\multirow{2}{*}{\textbf{4}}
& EAGLE-3 
& 2.44 & 4.65 & 2.82 & 5.42 & 2.64 & 4.81 & 2.63 & 4.96
& 2.04 & 3.12 & 2.58 & 4.71 & 2.30 & 3.89 & 2.30 & 3.91 \\
& \textbf{EAGLE-3+VSD}
& \textbf{2.66} & \textbf{5.02}
& \textbf{3.06} & \textbf{5.74}
& \textbf{2.77} & \textbf{5.16}
& \textbf{2.83} & \textbf{5.31}
& \textbf{2.13} & \textbf{3.26}
& \textbf{2.83} & \textbf{4.98}
& \textbf{2.51} & \textbf{4.17}
& \textbf{2.49} & \textbf{4.14} \\

\midrule

\multirow{2}{*}{\textbf{8}}
& EAGLE-3 
& 1.54 & 4.68 & 1.80 & 5.44 & 1.68 & 4.80 & 1.67 & 4.97
& 1.25 & 3.22 & 1.57 & 4.78 & 1.44 & 3.96 & 1.42 & 3.99 \\
& \textbf{EAGLE-3+VSD}
& \textbf{1.66} & \textbf{5.03}
& \textbf{1.97} & \textbf{5.79}
& \textbf{1.81} & \textbf{5.18}
& \textbf{1.81} & \textbf{5.33}
& \textbf{1.35} & \textbf{3.30}
& \textbf{1.69} & \textbf{5.01}
& \textbf{1.58} & \textbf{4.22}
& \textbf{1.54} & \textbf{4.17} \\

\bottomrule
\end{tabular}
}
\vspace{-0.6em}
\end{table*}
\noindent\textbf{Results on LLMs.}
As shown in Tab.~\ref{tab:mainexp}, VSD consistently outperforms all baselines, including the state-of-the-art EAGLE-3, across all datasets, models, and temperature settings. On average, VSD accepts 6--7 tokens per drafting--verification cycle, exceeding the 5--6 tokens achieved by EAGLE-3. As a result, VSD improves EAGLE-3's wall-clock speedup by an average of 9.6\% under greedy decoding ($T=0$) and 7.3\% under stochastic decoding ($T=1$), while preserving the lossless property of speculative decoding.

The gains of VSD are closely related to the draft--target ga. When greedy training collapses to suboptimal paths that deviate from the target model's preferred trajectories, VSD brings larger improvements by optimizing the draft policy toward valid path-level posteriors. This trend is supported by our EAGLE-3 experiments: tasks and models with lower Greedy Path Acceptance Rate generally obtain larger gains from VSD. For example, on MT-Bench, Vicuna-1.3-13B achieves a 13.6\% speedup-ratio improvement at $T=0$, whereas LLaMA-3.1-70B obtains a smaller gain of 5.7\%. This is consistent with their average Greedy Path Acceptance Rates: Vicuna-1.3-13B has a lower rate of 30\%, while LLaMA-3.1-70B reaches 36\%, indicating that the stronger model's greedy paths are already better aligned with target trajectories and thus leave less room for improvement.

VSD also yields larger benefits on tasks requiring long-horizon consistency. Benchmarks with longer average outputs, such as MT-Bench and HumanEval, exhibit more pronounced improvements across model sizes. In code generation task, VSD increases the speedup ratio by 12.4\% and 14.2\% at $T=0$ for LLaMA-3.1-8B and DeepSeek-R1-8B, respectively, highlighting its effectiveness for structured generation. On mathematical reasoning task (GSM8K), VSD also consistently surpasses EAGLE-3, with DeepSeek-R1-8B achieving gains of 7.0\% at $T=0$ and 4.2\% at $T=1$, suggesting that VSD better captures sequential reasoning patterns critical for problem solving.

Finally, VSD is more effective under greedy decoding. At $T=0$, the target distribution is more concentrated, which sharpens the valid-path posterior and makes it easier to align the draft policy with this path-level posterior. At $T=1$, the target distribution has higher entropy and accepts a wider range of drafts. As a result, the valid-path posterior becomes more dispersed, making it harder to align the draft policy with it and thereby reducing the relative efficiency gain.

\noindent\textbf{Results on MLLMs.}
As shown in Tab.~\ref{tab:mllm_exp}, VSD significantly improves inference efficiency on standard MLLM benchmarks. When integrated with MSD and ViSpec, VSD consistently increases both speedup ratio ($SR$) and acceptance length ($\tau$) across model scales and decoding temperatures.

The improvements on MLLMs follow the same pattern observed in LLMs. VSD provides larger gains when the baseline draft paths are less aligned with the target model and when the task requires longer-horizon consistency. For deterministic greedy decoding ($T=0$), VSD robustly generalizes to LLaVA-1.5-13B, improving MSD's average speedup and acceptance length by up to 10.1\% and 11.9\%, respectively, while improving ViSpec by 8.5\% and 8.8\%. Under stochastic decoding ($T=1$), VSD maintains consistent superiority, increasing the average speedup of MSD by 7.8\% and that of ViSpec by 6.8\% on LLaVA-1.5-13B.

In addition, across diverse visual-language tasks, the largest improvements appear on benchmarks involving more structured or long-horizon reasoning, such as ChartQA and AI2D, while gains on short-horizon tasks are relatively smaller.

\noindent\textbf{Compatibility Evaluation.} To further test the compatibility and transferability of VSD, we apply VSD to finetune draft models of other speculative methods, i.e., GRIFFIN and HASS,   on LLaMA-3-Instruct-8B.  
In Tab.~\ref{tab:mainexp1}, both GRIFFIN+VSD and HASS+VSD achieve consistent performance gains. For greedy decoding ($T=0$), GRIFFIN+VSD improves average speedup and acceptance length by 6.3\% and 5.4\%, respectively, while HASS+VSD yields improvements of 7.6\% and 6.4\%. Similarly, in stochastic decoding ($T=1$), VSD increases the $SR$ of GRIFFIN and HASS by 5.8\% and 6.5\%, respectively. These  results on LLMs and the improvements of integrating VSD with MSD and ViSpec confirm VSD’s versatility across distinct backbones and its effectiveness to align training and decoding objectives.

\subsection{Ablation Study}
\label{sec:ab}
\textbf{Effect of number of latent proposals $S$.}
We first analyze the performance under varying numbers of latent proposals $S \in \{10, 20, 40\}$. As shown in Tab.~\ref{tab:ab1}, increasing $S$ consistently improves both the speedup ratio ($SR$) and the average acceptance length ($\tau$) across benchmarks. For example, at temperature $T=0$, $SR$ increases from $3.97$ to $4.06$ and $\tau$ from $6.58$ to $6.73$ on average when $S$ grows from $10$ to $40$. This scaling phenomenon demonstrates that VSD can effectively leverage draft samples and bridge the objective misalignment between training and decoding.

\textbf{Effect of adaptive rejection weighting (ARW).}
Incorporating ARW improves both $SR$ and $\tau$ across $S$ and temperature. Without ARW, sample efficiency is reduced, resulting in weaker draft paths and shorter acceptance lengths. For instance, at $S=40$ and $T=0$, adding ARW increases the average $SR$ from $4.06$ to $4.15$ and $\tau$ from $6.73$ to $6.88$. This confirms that ARW effectively reduces variance in the Monte Carlo updates, leading to an efficient draft policy.

\textbf{Effect of confidence-aware regularization (CAR).}
Adding CAR on top of ARW further stabilizes training and improves performance, since CAR mitigates the influence of overconfident hard samples. As a result, the average $SR$ and $\tau$ have additional gains; for example, at $S=40$ and $T=0$, the average $SR$ increases from $4.15$ to $4.22$.

\noindent\textbf{Ablation on Batch Size.}
Tab.~\ref{tab:batchsize_ablation} studies the effect of batch size on LLaMA-3.1 Instruct 8B. At $T=0$, VSD increases the average speedup ratio by 7.6\% and 8.4\% for batch sizes 4 and 8, respectively, while improving the average acceptance length by 7.1\% and 7.2\%. Similar trends are observed at $T=1$, where VSD improves the average speedup ratio by 8.3\% and 8.5\% for batch sizes 4 and 8.

Although the speedup ratio naturally decreases when increasing the batch size from 4 to 8 due to the reduced benefit of speculative decoding under heavier parallel workloads, VSD maintains relative gains over EAGLE-3, confirming that VSD enhances the intrinsic quality of draft tokens.

Overall, these results validate the key design choices in VSD and highlight its robustness across benchmark tasks.
\section{Conclusion}
We introduce VSD, a principled framework that resolves the training--decoding distributional discrepancy in speculative decoding. VSD formulates draft training as variational inference over latent draft paths, directly favoring verification-valid proposals via path-level utility. An Monte Carlo-based EM algorithm enables efficient optimization. Theoretical analysis validates
that VSD increases expected acceptance length
and speedup. Experiments across LLMs and MLLMs show consistent improvements in acceptance length and state-of-the-art decoding speedups, complementary to existing speculative decoding methods.

\noindent\textbf{Limitations.}
Due to computational constraints, it is hard for us to scale the number of latent proposals beyond $S=40$ in  Monte Carlo estimation during draft model training. Given the observed scaling behavior of VSD in Sec.~\ref{sec:ab}, we expect that both  speedup ratio and acceptance length would have further improvements with larger $S$. Moreover, our experiments mainly focus on text and visual tasks; extending VSD to other modalities like speech remains as future work.

\newpage
\section*{Acknowledgements}
This work was supported by Ant Digital Technologies, Ant Group Research Fund, and the Singapore Ministry of Education (MOE) Academic Research Fund (AcRF) Tier 1 grant (Proposal ID: 25-SIS-SMU-003).

\section*{Impact Statement}
The pretrained LLMs and MLLMs employed in this research may reflect biases or generate sensitive or potentially offensive content, intended solely for academic and scientific purposes. The opinions expressed within generated outputs do not represent the views of the authors. We remain committed to fostering the development of AI technologies which align with ethical standards and reflect societal values.

\bibliography{ref}
\bibliographystyle{icml2026}

%%%%%%%%%%%%%%%%%%%%%%%%%%%%%%%%%%%%%%%%%%%%%%%%%%%%%%%%%%%%%%%%%%%%%%%%%%%%%%%
%%%%%%%%%%%%%%%%%%%%%%%%%%%%%%%%%%%%%%%%%%%%%%%%%%%%%%%%%%%%%%%%%%%%%%%%%%%%%%%
% APPENDIX
%%%%%%%%%%%%%%%%%%%%%%%%%%%%%%%%%%%%%%%%%%%%%%%%%%%%%%%%%%%%%%%%%%%%%%%%%%%%%%%
%%%%%%%%%%%%%%%%%%%%%%%%%%%%%%%%%%%%%%%%%%%%%%%%%%%%%%%%%%%%%%%%%%%%%%%%%%%%%%%
\newpage
\appendix
\onecolumn
\section{Notation and Definitions}
\label{appendix:notation}

The major notations used throughout the paper are summarized in Table~\ref{tab:notation_vsd}.

\begin{table}[h]
\caption{Major notations used in VSD.}
\label{tab:notation_vsd}
\centering
\setlength{\tabcolsep}{3.9em}
\scalebox{1.0}{
\begin{tabular}{ll}
\toprule
\textbf{Notation} & \textbf{Definition} \\
\midrule

$\mathbf{x}$ & Input prefix/context. \\

$\mathbf{z}=(z_1,\dots,z_\ell)$ 
& Draft path proposed by the draft model. \\

$\ell$ 
& Length of the draft path. \\

$q_\psi(\mathbf{z}\mid\mathbf{x})$
& Draft model distribution parameterized by $\psi$. \\

$p_\theta(\mathbf{z}\mid\mathbf{x})$
& Reference/target path distribution parameterized by $\theta$. \\

$\alpha_i(\mathbf{x},\mathbf{z}_{<i})$
& Token-level acceptance probability at step $i$. \\

$\kappa(\mathbf{x},\mathbf{z})$
& Path-level validity probability:
$\prod_{i=1}^{\ell}\alpha_i(\mathbf{x},\mathbf{z}_{<i})$. \\

$\rho \in \{0,1\}$
& Indicator variable of whether a draft path is fully accepted. \\

$Z_\theta(\mathbf{x})$
& Marginal probability of generating a valid path:
$p_\theta(\rho=1\mid\mathbf{x})$. \\

$p_\theta(\mathbf{z}\mid\mathbf{x},\rho=1)$
& Valid-path posterior distribution. \\

$\mathcal{L}_{\mathrm{VSD}}$
& Variational speculative decoding objective (ELBO). \\

$S_k(\mathbf{x},\mathbf{z})$
& Survival probability of the first $k$ draft tokens. \\

$\mathcal{A}(q_\psi;\mathbf{x})$
& Expected accepted length under draft distribution $q_\psi$. \\

$A(\mathbf{x},\mathbf{z})$
& Random variable representing accepted draft length. \\

$S$
& Number of latent proposals sampled in the E-step. \\

$\beta$
& Adaptive rejection weighting coefficient in ARW. \\

$\zeta$
& Confidence-aware regularization weight in CAR. \\

\bottomrule
\end{tabular}}
\end{table}

\section{Algorithm}
\label{app:algo}
\subsection{Algorithm Detail}
\label{app:algod}
\begin{algorithm}[H]
	\caption{MC-based EM Framework for Speculative Draft Training}
	\label{alg:em_training}
	\begin{algorithmic}[1]
		\INPUT Prefix $\mathbf{x}$, draft policy $q_\psi$, target policy $p_\theta$, oracle $o$, sampling length $L$, sampling size $S$.
		\OUTPUT Parameters $\psi$ of the draft model.
		\STATE \textcolor{gray}{// E-step: Stochastic Latent Proposal}
		\FOR{$\ell = 1, \dots, L$}
		\FOR{$s = 1, \dots, S$}
		\STATE
		$
		\tilde{\mathbf{z}}^{(s)}_{<\ell}
		\sim
		q_\psi(\cdot \mid \mathbf{x})
		$
		
		\IF{$o(\tilde{\mathbf{z}}^{(s)}_{<\ell},  \mathbf{x})>0$}
		\STATE $\mathbf{z}^{(s)}_{<\ell} \leftarrow \tilde{\mathbf{z}}^{(s)}_{<\ell}$
		\ELSE
		\STATE
		$
		\mathbf{z}'^{(s)}_{<\ell} \sim p_\theta(\cdot \mid  \mathbf{x})
		$
		\STATE $\mathbf{z}^{(s)}_{<\ell} \leftarrow \mathbf{z}'^{(s)}_{<\ell}$
		\ENDIF
		\ENDFOR
		\ENDFOR
		
		\STATE \textcolor{gray}{// M-step: Draft Policy Optimization}
		\STATE Let $\tilde{o}^{(\ell,s)} = o(\tilde{\mathbf{z}}^{(s)}_{<\ell},  \mathbf{x})$
		\STATE Let $o'^{(\ell,s)} = o(\mathbf{z}^{(s)}_{<\ell},  \mathbf{x})$
		\STATE Estimate gradient $\hat g$ via \texttt{basic\_gradient}
		\begin{equation}
        \label{eq:basic}
				\begin{aligned}
					\hat g = \frac{1}{\sum_{\ell=1}^{L}\sum_{s=1}^{S} o'^{(l,s)}} \sum_{\ell=1}^{L}\sum_{s=1}^{S} o'^{(l,s)}
					\nabla_\psi \log q_\psi(\mathbf{z}^{(s)}_{<\ell} \mid  \mathbf{x})
			\end{aligned}
		\end{equation}
		\STATE or \texttt{control\_variate\_gradient} with given temperature $\tau$
        \begin{align}
    \beta^{(\ell,s)} &\leftarrow 
    \frac{\sum_{s' \neq s} o'^{(\ell,s')} \tilde{o}^{(\ell,s')}}{\sum_{s' \neq s} o'^{(\ell,s')}}, \label{eq:beta_update} \\
    \zeta^{(\ell,s)} &\leftarrow
    \frac{\exp(q_\psi(\tilde{\mathbf{z}}_{<\ell}^{(s)} \mid \mathbf{x}) / \tau)}
    {\sum_{j=1}^S \exp(q_\psi(\tilde{\mathbf{z}}_{<\ell}^{(j)} \mid \mathbf{x}) / \tau)}. \label{eq:zeta_update}
\end{align}
		\begin{equation}
        \label{eq:advance}
				\begin{aligned}
					\hat g &= \frac{1}{\sum_{\ell=1}^{L}\sum_{s=1}^{S} o'^{(l,s)}} \sum_{\ell=1}^{L}\sum_{s=1}^{S} o'^{(l,s)}
					\Big[
					\nabla_\psi \log q_\psi(\mathbf{z}^{(s)}_{<\ell} \mid  \mathbf{x})- \beta^{(l,s)}\cdot\zeta^{(l,s)}  \cdot
					\nabla_\psi \log q_\psi(\tilde{\mathbf{z}}^{(s)}_{<\ell} \mid  \mathbf{x})
					\Big]
			\end{aligned}
		\end{equation}
		
		\STATE Update draft parameters: $\psi \leftarrow \text{Optimizer}(\psi, \hat g)$
		
	\end{algorithmic}
\end{algorithm}

\begin{algorithm}[H]
	\caption{Oracle $o(\mathbf{z}_{<\ell},  \mathbf{x})$}
	\label{alg:oracle}
	\begin{algorithmic}[1]
		\INPUT Greedy target policy $p_\theta^{(1)}$, latent proposal $\mathbf{z}_{<\ell}$, prefix $\mathbf{x}$, indicator function $\mathbb{I}[\cdot]$, weighting factor $w(\ell)$.
		\OUTPUT Scalar $o$.
		
		\STATE \textcolor{gray}{// Greedy proposal baseline}
		\STATE $\mathbf{\bar{z}}_{<\ell} \sim p_\theta^{(1)}(\cdot \mid \mathbf{x})$
		\STATE \textcolor{gray}{// Compute path-level utility based on $\kappa$ defined in Eqn.~\eqref{eq:kappa_vi_polished}}
		\STATE
		$o(\mathbf{z}_{<\ell}, \mathbf{x})
		\leftarrow
		w(\ell)\mathbb{I}\!\left[
		\log \kappa(\mathbf{x},\mathbf{z}_{<\ell})
		\ge
        \log \kappa(\mathbf{x},\mathbf{\bar{z}}_{<\ell})
		\right]$
	\end{algorithmic}
\end{algorithm}

\subsection{Algorithm Rationale}
\label{app:algor}
\textbf{Oracle.}
The oracle (Algorithm~\ref{alg:oracle}) provides a principled mechanism for optimizing the path-level validity term in the ELBO. Rather than rewarding all sampled trajectories equally, it selects trajectories whose utility matches or exceeds that of a reference trajectory, thereby concentrating learning on paths that are more likely to survive verification. We use the greedy target path as the reference because it represents the standard upper bound in traditional distillation.

\textbf{Expectation Step.}
The E-step (Algorithm~\ref{alg:em_training}) approximates the posterior distribution of valid speculative paths. Rather than relying solely on the static target distribution, we employ a \textit{proposal-and-correction} scheme to construct a set of high path-level utility latent proposals.

For each sampling length $\ell \in [1, L]$, the draft policy $q_\psi$ first acts as a proposal distribution, generating $S$ latent proposals $\tilde{\mathbf{z}}_{<\ell}$. This stochastic proposal prevents the mode collapse often observed in greedy distillation and ensures exploration of the solution space. To enforce alignment with the target model, we introduce a verification oracle $o(\cdot)$ (Algorithm~\ref{alg:oracle}) that acts as a gating function. The oracle based on the indicator function $\mathbb{I}(\cdot)$ validates a latent proposal only if its path-level decoding utility $\log \kappa(\mathbf{x},\mathbf{{z}}_{<\ell})$, where $\kappa$ defined in Eqn.~\eqref{eq:kappa_vi_polished}, matches or exceeds that of a greedy baseline $\mathbf{\bar{z}}$ generated by the target model $p^{(1)}_{\theta}$.

Crucially, to maintain a consistent training signal, we do not simply discard rejected latent proposals. Instead, when a draft fails verification (i.e., $o(\cdot) = 0$), we use a correction mechanism by resampling a substitute path $\mathbf{z}'_{<\ell}$ from the target policy $p_\theta(\cdot \mid \mathbf{x})$. This ensures that the set of latent proposals $\{\mathbf{z}^{(s)}_{<\ell}\}$ constitutes a valid approximation of the posterior distribution supported by the target model. Finally, a length-dependent weight $w(\ell)$ is applied to assign higher variance reduction importance to longer, valid latent proposals (see Appendix~\ref{app:imp}).

\textbf{Maximization step.}
The M-step (Algorithm~\ref{alg:em_training}) updates the draft policy parameters $\psi$ to maximize the likelihood of the latent proposals identified in the E-step. We employ a variance-reduced gradient estimator that dynamically weighs the contribution of latent proposals. The gradient is in the form (see the exact form in Eqn.~\eqref{eq:advance}):
\begin{equation}
	\label{eq:grad_final}
	\hat{g} = \mathbb{E}_{\ell, s} \left[
	\underbrace{\nabla_{\psi}\log q_\psi(\mathbf{z}^{(s)}_{<\ell} \mid \mathbf{x})}_{\text{Target Proposal}}
	- \beta^{(\ell,s)} \zeta^{(\ell,s)} \underbrace{\nabla_{\psi}\log q_\psi(\tilde{\mathbf{z}}^{(s)}_{<\ell} \mid \mathbf{x})}_{\text{Proposal Correction}}
	\right].
\end{equation}
This estimator incorporates two critical mechanisms—i) Adaptive Rejection Weighting ($\beta$) and ii) Confidence-Aware Regularization ($\zeta$)—to mitigate the gradient variance.

\emph{i) Adaptive Rejection Weighting (ARW).}
To stabilize training across different phases of convergence, ARW introduces a control variate coefficient $\beta^{(\ell,s)}$ that estimates the reliability of the current draft policy. We compute $\beta$ using a leave-one-out estimator to minimize bias:
\begin{equation}
	\label{eq:beta}
	\beta^{(\ell,s)} = 
	\frac{\sum_{j \neq s} o'^{(\ell,j)} \tilde{o}^{(\ell,j)}}{\sum_{j \neq s} o'^{(\ell,j)}}.
\end{equation}
Here, $\tilde{o}$ represents the binary validity of the latent proposal, while $o'$ is the weight of the final corrected latent proposal. Mathematically, $\beta^{(\ell,s)}$ serves as a proxy for the \textit{local acceptance rate}. 
\begin{itemize}
    \item \textit{Low Reliability Regime ($\beta \to 0$):} When the draft model is weak, most proposals are rejected. ARW naturally anneals the ``Proposal Correction'' term, reducing the gradient to standard supervised learning on the target-corrected samples $\mathbf{z}$. This prevents noisy, high-variance rejection signals from destabilizing early training.
    \item \textit{High Reliability Regime ($\beta \to 1$):} As the model converges, $\beta$ approaches 1. For valid proposals where $\mathbf{z} = \tilde{\mathbf{z}}$, the two gradient terms cancel out, effectively zeroing the loss for paths that are already optimal. This acts as an \textit{automatic early stopping} mechanism for mastered proposals, focusing the model's capacity on difficult, rejected branches (where $\mathbf{z} \neq \tilde{\mathbf{z}}$).
\end{itemize}
It adaptively balances positive and negative signals. By conditioning gradient updates on path-level acceptance statistics, ARW reduces gradient variance while preserving informative rejection signals, guiding the draft policy toward the multi-path decoding distribution induced by the target model.

\emph{ii) Confidence-Aware Regularization (CAR).}
While ARW manages gradient variance, it treats all rejections equally. However, \textit{confident errors}—where the model assigns high probability to invalid paths—are particularly detrimental to speculative tree efficiency~\citep{eagle2}.

To address this issue, we introduce Confidence-Aware Regularization, which reweights rejected latent proposals based on their confidence under the draft model. Specifically, rejected latent proposals $\tilde{\mathbf{z}}_{<\ell}^{(s)}$ are assigned weights
\begin{equation}
	\zeta^{(\ell,s)} =
	\frac{\exp(q_\psi(\tilde{\mathbf{z}}_{<\ell}^{(s)} \mid  \mathbf{x}) / \tau)}
	{\sum_{j=1}^S \exp(q_\psi(\tilde{\mathbf{z}}_{<\ell}^{(j)} \mid  \mathbf{x}) / \tau)},
\end{equation}
where $\tau$ controls the strength of confidence-based reweighting (see detailed $\tau$ Appendix \ref{app:imp}).

This formulation functions as a \textit{hard-negative mining} mechanism. By normalizing across the batch $S$, CAR penalizes relative overconfidence: a rejected latent proposal that is significantly more confident than its peers receives a dominating gradient penalty ($\zeta \uparrow$), reducing its likelihood. Conversely, low-confidence exploration errors are down-weighted, preserving the policy's entropy and preventing mode collapse.

\section{Proof}

\subsection{Proof of Theorem~\ref{thm:variational_identity}}\label{asfdsfs}
\begin{proof}
	Recall the verification-centric variational objective given a fixed prefix $\mathbf{x}$:
	\begin{equation}
		\label{eq:elbo_recall}
		\mathcal{L}_{\mathrm{ VSD}}(q_{\psi};\mathbf{x})
		:=
		\mathbb{E}_{\mathbf{z}\sim q_{\psi}(\cdot\mid \mathbf{x})}\!\big[\log \kappa(\mathbf{x},\mathbf{z})\big]
		-\mathbb{D}_{\mathrm{KL}}\!\big(q_{\psi}(\cdot\mid \mathbf{x})\,\|\,p_\theta(\cdot\mid \mathbf{x})\big),
	\end{equation}
	where $p_\theta(\mathbf{z}\mid \mathbf{x})$ is a target model distribution.

	Since $\alpha_{i}\in(0,1]$, we have $S_1(\mathbf{x},\mathbf{z})\ge \cdots \ge S_\ell(\mathbf{x},\mathbf{z})=\kappa(\mathbf{x},\mathbf{z})$.
	Therefore
	\begin{equation}
		\label{eq:A_ge_Lkappa_pointwise}
		\sum_{k=1}^{\ell} S_k(\mathbf{x},\mathbf{z})
		\;\ge\;
		\sum_{k=1}^{\ell}\kappa(\mathbf{x},\mathbf{z})
		=
		\ell\,\kappa(\mathbf{x},\mathbf{z}).
	\end{equation}
	Taking expectation over $\mathbf{z}\sim q_{\psi}(\cdot\mid \mathbf{x})$ yields
	\begin{equation}
		\label{eq:A_ge_Lkappa}
		\mathcal{A}\big(q_{\psi};\mathbf{x}\big)\;\ge\; \ell \,\mathbb{E}_{\mathbf{z}\sim q_{\psi}(\cdot\mid \mathbf{x})}\big[\kappa(\mathbf{x},\mathbf{z})\big].
	\end{equation}

	By Jensen's inequality, for any $q_{\psi}(\mathbf{z}\mid \mathbf{x})$,
	\begin{equation}
		\label{eq:jensen_kappa}
		\mathbb{E}_{q_{\psi}}\big[\kappa(\mathbf{x},\mathbf{z})\big]
		\;\ge\;
		\exp\!\Big(\mathbb{E}_{q_{\psi}}\big[\log \kappa(\mathbf{x},\mathbf{z})\big]\Big).
	\end{equation}
	Combining \eqref{eq:A_ge_Lkappa} and \eqref{eq:jensen_kappa}, we yield
	\begin{equation}
		\label{eq:A_lower_by_Elogkappa}
		\mathcal{A}\big(q_{\psi};\mathbf{x}\big)
		\;\ge\;
		\ell\,\exp\!\Big(\mathbb{E}_{q_{\psi}}\big[\log \kappa(\mathbf{x},\mathbf{z})\big]\Big).
	\end{equation}
	Thus, any training procedure that increases $\mathbb{E}_{q_{\psi}}[\log \kappa(\mathbf{x},\mathbf{z})]$ strictly increases the guaranteed lower bound on the expected accepted length. 
	
	Since $\mathbb{D}_{\mathrm{KL}}(\cdot\|\cdot)\ge 0$, we have
	\begin{equation}
		\label{eq:ELBO_le_Elogkappa}
		\mathcal{L}_{\mathrm{ VSD}}(q_{\psi};\mathbf{x})\le \mathbb{E}_{q_{\psi}}\big[\log \kappa(\mathbf{x},\mathbf{z})\big].
	\end{equation}
	Combining \eqref{eq:A_lower_by_Elogkappa} and \eqref{eq:ELBO_le_Elogkappa}, we have
	\begin{equation}
		\label{eq:A_lower_by_ELBO}
		\mathcal{A}\big(q_{\psi};\mathbf{x}\big)
		\;\ge\;
		\ell\,\exp\!\Big(\mathcal{L}_{\mathrm{ VSD}}(q_{\psi};\mathbf{x})\Big).
	\end{equation}
	Therefore, if training increases the VSD loss  by $\Delta>0$ on the same prefix $\mathbf{x}$ (or in expectation over prefixes),
	then the guaranteed lower bound on expected accepted length increases multiplicatively by a factor $e^\Delta$.
\end{proof}

\subsection{Proof of Theorem~\ref{thm:better_than_kl_only}}
\label{sec:compare_kl_only}
\begin{proof}
	By definition,
	\begin{equation}
	Z_\theta(\mathbf{x})
	=
	\mathbb{E}_{\mathbf{z}\sim p_\theta(\cdot\mid\mathbf{x})}
	[
	\kappa(\mathbf{x},\mathbf{z})
	].
	\end{equation}
	Since $\log(\cdot)$ is concave, Jensen's inequality gives
	\begin{align}
		\log Z_\theta(\mathbf{x})
		&=
		\log
		\mathbb{E}_{\mathbf{z}\sim p_\theta(\cdot\mid\mathbf{x})}
		[
		\kappa(\mathbf{x},\mathbf{z})
		] \nonumber\\
		&\ge
		\mathbb{E}_{\mathbf{z}\sim p_\theta(\cdot\mid\mathbf{x})}
		[
		\log \kappa(\mathbf{x},\mathbf{z})
		].
	\end{align}
	The inequality is strict whenever $\kappa(\mathbf{x},\mathbf{z})$ is not almost surely constant under
	$p_\theta(\cdot\mid\mathbf{x})$.
	
	For the KL-only baseline, the minimizer of
	$\mathbb{D}_{\mathrm{KL}}(q_\psi\|p_\theta)$ is
	$q_{\mathrm{KL}}(\cdot\mid\mathbf{x})=p_\theta(\cdot\mid\mathbf{x})$.
	Therefore,
	\begin{align}
		\mathcal{L}_{\mathrm{VSD}}(q_{\mathrm{KL}};\mathbf{x})
		&=
		\mathbb{E}_{\mathbf{z}\sim p_\theta(\cdot\mid\mathbf{x})}
		[
		\log \kappa(\mathbf{x},\mathbf{z})
		]
		-
		\mathbb{D}_{\mathrm{KL}}
		\!\left(
		p_\theta(\cdot\mid\mathbf{x})
		\,\|\,
		p_\theta(\cdot\mid\mathbf{x})
		\right) \nonumber\\
		&=
		\mathbb{E}_{\mathbf{z}\sim p_\theta(\cdot\mid\mathbf{x})}
		[
		\log \kappa(\mathbf{x},\mathbf{z})
		].
	\end{align}
	
	On the other hand, since $q_\psi^\star$ maximizes the VSD objective, and the
	ELBO is tight at the variational optimum, we have
	\begin{equation}
	\mathcal{L}_{\mathrm{VSD}}(q_\psi^\star;\mathbf{x})
	=
	\log Z_\theta(\mathbf{x}).
	\end{equation}
	Combining this equality with Jensen's inequality yields
	\begin{equation}
	\mathcal{L}_{\mathrm{VSD}}(q_\psi^\star;\mathbf{x})
	=
	\log Z_\theta(\mathbf{x})
	\ge
	\mathcal{L}_{\mathrm{VSD}}(q_{\mathrm{KL}};\mathbf{x}),
	\end{equation}
	with strict inequality whenever $\kappa(\mathbf{x},\mathbf{z})$ is non-constant
	on the support of $p_\theta(\cdot\mid\mathbf{x})$.
	
	Finally, applying the accepted-length lower bound in
	Eqn.~\eqref{eq:A_lower_by_ELBO} to both $q_\psi^\star$ and
	$q_{\mathrm{KL}}$, and using the monotonicity of the exponential function,
	we obtain
	\begin{equation}
	\ell\exp\!\left(
	\mathcal{L}_{\mathrm{VSD}}(q_\psi^\star;\mathbf{x})
	\right)
	\ge
	\ell\exp\!\left(
	\mathcal{L}_{\mathrm{VSD}}(q_{\mathrm{KL}};\mathbf{x})
	\right).
	\end{equation}
\end{proof}

\subsection{Proof of Theorem~\ref{thm:variational_identity2}}
\label{asfdsfs22}

We first provide a formal statement of Theorem~\ref{thm:variational_identity2}. At the $t$-th EM update, the current draft model $q_{\psi^{(t)}}$ is used to compute the ELBO terms in the E-step. These terms are treated as fixed when optimizing the candidate draft distribution $q_\psi$ in the M-step.
\begin{theorem*}
	\label{thm:variational_identity22}
	Fix a prefix $\mathbf{x}$ and consider the $t$-th EM update. Let
	$q_{\psi^{(t)}}(\cdot\mid\mathbf{x})$ be the current draft distribution used in the E-step, and assume it has full support on the draft path space. Let
	$p_\theta(\mathbf{z}\mid\mathbf{x})$ be the target reference distribution over draft paths. Define the step-level path validity probability
	\begin{equation}
		\label{eq:step_kappa_thm}
		\kappa_t(\mathbf{x},\mathbf{z})
		=
		\prod_{i=1}^{\ell}
		\min\!\left(
		1,\,
		\frac{
		p_\theta(z_i\mid\mathbf{x},\mathbf{z}_{<i})
		}{
		q_{\psi^{(t)}}(z_i\mid\mathbf{x},\mathbf{z}_{<i})
		}
		\right),
	\end{equation}
	and assume $\kappa_t(\mathbf{x},\mathbf{z})>0$ on the support considered. Define
	\begin{equation}
		\label{eq:Z_theta_step_thm}
		Z_{\theta,t}(\mathbf{x})
		:=
		\sum_{\mathbf{z}}
		p_\theta(\mathbf{z}\mid\mathbf{x})
		\kappa_t(\mathbf{x},\mathbf{z})
		=
		\mathbb{E}_{\mathbf{z}\sim p_\theta(\cdot\mid\mathbf{x})}
		[
		\kappa_t(\mathbf{x},\mathbf{z})
		],
	\end{equation}
	and the corresponding step-level valid-path posterior
	\begin{equation}
		\label{eq:posterior_valid_step_thm}
		p_{\theta,t}(\mathbf{z}\mid \mathbf{x},\rho=1)
		:=
		\frac{
		p_\theta(\mathbf{z}\mid\mathbf{x})\,
		\kappa_t(\mathbf{x},\mathbf{z})
		}{
		Z_{\theta,t}(\mathbf{x})
		}.
	\end{equation}
	For any candidate distribution $q_\psi(\cdot\mid\mathbf{x})$ over draft paths, which serves as the optimization variable in the M-step, define the step-level VSD surrogate
	\begin{equation}
		\label{eq:VSD_objective_step_thm}
		\mathcal{L}_{\mathrm{VSD}}^{(t)}(q_\psi;\mathbf{x})
		:=
		\mathbb{E}_{\mathbf{z}\sim q_\psi(\cdot\mid\mathbf{x})}
		\!\left[
		\log \kappa_t(\mathbf{x},\mathbf{z})
		\right]
		-
		\mathbb{D}_{\mathrm{KL}}
		\!\left(
		q_\psi(\cdot\mid\mathbf{x})
		\,\|\,
		p_\theta(\cdot\mid\mathbf{x})
		\right).
	\end{equation}
	Then:
	\begin{enumerate}[leftmargin=1.4em]
		\item (\textbf{Variational identity}) For any such $q_\psi(\cdot\mid\mathbf{x})$,
		\begin{equation}
			\label{eq:variational_identity_step_thm}
			\mathcal{L}_{\mathrm{VSD}}^{(t)}(q_\psi;\mathbf{x})
			=
			\log Z_{\theta,t}(\mathbf{x})
			-
			\mathbb{D}_{\mathrm{KL}}
			\!\left(
			q_\psi(\cdot\mid\mathbf{x})
			\,\|\,
			p_{\theta,t}(\cdot\mid\mathbf{x},\rho=1)
			\right).
		\end{equation}
		\item (\textbf{Optimality for the step-level surrogate}) We have
		\begin{equation}
			\label{eq:opt_value_step_thm}
			\max_{q_\psi}
			\mathcal{L}_{\mathrm{VSD}}^{(t)}(q_\psi;\mathbf{x})
			=
			\log Z_{\theta,t}(\mathbf{x}),
		\end{equation}
		and the unique maximizer is
		\begin{equation}
			\label{eq:unique_argmax_step_thm}
			q_t^\star(\cdot\mid\mathbf{x})
			=
			p_{\theta,t}(\cdot\mid\mathbf{x},\rho=1).
		\end{equation}
	\end{enumerate}
\end{theorem*}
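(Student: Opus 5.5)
The plan is to prove the variational identity (part 1) by a direct algebraic rewriting of the log-ratio inside the expectation. Part 2 then follows from nonnegativity of the KL divergence and the equality condition in Gibbs' inequality.

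For part 1, fix $\mathbf{x}$ and a candidate $q_\psi(\cdot\mid\mathbf{x})$. On the support of $q_\psi$, the definition \eqref{eq:posterior_valid_step_thm} gives
$\log p_{\theta,t}(\mathbf{z}\mid\mathbf{x},\rho=1)=\log p_\theta(\mathbf{z}\mid\mathbf{x})+\log\kappa_t(\mathbf{x},\mathbf{z})-\log Z_{\theta,t}(\mathbf{x})$.
This is well defined because $\kappa_t>0$ by assumption and $Z_{\theta,t}(\mathbf{x})>0$ (it is an average of positive quantities). I will expand \eqref{eq:VSD_objective_step_thm} as
$\mathbb{E}_{q_\psi}\big[\log\kappa_t+\log p_\theta-\log q_\psi\big]$.
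Substituting $\log\kappa_t+\log p_\theta=\log p_{\theta,t}(\cdot\mid\mathbf{x},\rho=1)+\log Z_{\theta,t}(\mathbf{x})$ turns this into
$\log Z_{\theta,t}(\mathbf{x})-\mathbb{E}_{q_\psi}\big[\log q_\psi-\log p_{\theta,t}(\cdot\mid\mathbf{x},\rho=1)\big]$,
which is exactly \eqref{eq:variational_identity_step_thm}. The constant $\log Z_{\theta,t}$ passes outside the expectation because $q_\psi$ sums to one. This mirrors the standard ELBO decomposition behind \eqref{eq:elbo_vi}, with $\kappa$ replaced by the frozen $\kappa_t$. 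The key point is that $\kappa_t$ depends on $q_{\psi^{(t)}}$ and not on the optimization variable $q_\psi$, so it acts as a fixed likelihood term.

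For part 2, I apply Gibbs' inequality: $\mathbb{D}_{\mathrm{KL}}(q_\psi\|p_{\theta,t}(\cdot\mid\mathbf{x},\rho=1))\ge 0$, with equality if and only if $q_\psi=p_{\theta,t}(\cdot\mid\mathbf{x},\rho=1)$. Hence $\mathcal{L}^{(t)}_{\mathrm{VSD}}\le\log Z_{\theta,t}(\mathbf{x})$ for every $q_\psi$. The bound is attained by $q_t^\star=p_{\theta,t}(\cdot\mid\mathbf{x},\rho=1)$, which is a valid distribution over paths, so the maximum equals $\log Z_{\theta,t}(\mathbf{x})$. Uniqueness follows from the equality case of Gibbs' inequality, i.e., strict convexity of $u\mapsto u\log u$.

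The main obstacle is the measure-theoretic bookkeeping rather than any real difficulty. One issue is infinite terms. If $q_\psi$ puts mass where $p_\theta=0$, then the KL in \eqref{eq:VSD_objective_step_thm} is $+\infty$ and $\mathcal{L}^{(t)}_{\mathrm{VSD}}=-\infty$. In that case the posterior also vanishes there, so both sides of the identity equal $-\infty$ and the identity still holds in the extended sense.

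The other issue is finiteness of $\mathbb{E}_{q_\psi}[\log\kappa_t]$. Since $\kappa_t\le 1$, this expectation is bounded above by $0$, so no $\infty-\infty$ ambiguity arises when the expectation is split. I will state these conventions explicitly at the start, treating the path space as countable, as the sums in \eqref{eq:Z_theta_step_thm} implicitly assume. The full-support assumption on $q_{\psi^{(t)}}$ is used only to make the ratios in \eqref{eq:step_kappa_thm} well defined.
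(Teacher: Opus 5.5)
Your proposal is correct and follows essentially the same route as the paper: substitute $\log p_{\theta,t}(\mathbf{z}\mid\mathbf{x},\rho=1)=\log p_\theta+\log\kappa_t-\log Z_{\theta,t}$ to obtain $\mathcal{L}^{(t)}_{\mathrm{VSD}}=\log Z_{\theta,t}-\mathbb{D}_{\mathrm{KL}}(q_\psi\,\|\,p_{\theta,t})$, then use nonnegativity of KL and its equality case to get the maximum and the unique maximizer. The differences are minor. You expand the surrogate, whereas the paper expands the KL to the posterior and rearranges. You also handle the extended-value cases ($-\infty$ on both sides when $q_\psi$ charges paths with $p_\theta=0$), which the paper leaves implicit.
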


\begin{proof}
	We first prove the variational identity. The optimality claims then follow from the non-negativity of KL divergence.
	
	By definition of KL divergence,
	\begin{align}
		&\mathbb{D}_{\mathrm{KL}}
		\!\left(
		q_\psi(\cdot\mid\mathbf{x})
		\,\|\,
		p_{\theta,t}(\cdot\mid\mathbf{x},\rho=1)
		\right) \nonumber\\
		&=
		\sum_{\mathbf{z}}
		q_\psi(\mathbf{z}\mid\mathbf{x})
		\log
		\frac{
		q_\psi(\mathbf{z}\mid\mathbf{x})
		}{
		p_{\theta,t}(\mathbf{z}\mid\mathbf{x},\rho=1)
		}.
		\label{eq:step_kl_expand_1}
	\end{align}
	Substituting the posterior definition in Eqn.~\eqref{eq:posterior_valid_step_thm}, we have
	\begin{align}
		\log p_{\theta,t}(\mathbf{z}\mid\mathbf{x},\rho=1)
		=
		\log p_\theta(\mathbf{z}\mid\mathbf{x})
		+
		\log \kappa_t(\mathbf{x},\mathbf{z})
		-
		\log Z_{\theta,t}(\mathbf{x}).
		\label{eq:step_log_post}
	\end{align}
	Plugging Eqn.~\eqref{eq:step_log_post} into Eqn.~\eqref{eq:step_kl_expand_1} yields
	\begin{align}
		&\mathbb{D}_{\mathrm{KL}}
		\!\left(
		q_\psi
		\,\|\,
		p_{\theta,t}(\cdot\mid\mathbf{x},\rho=1)
		\right) \nonumber\\
		&=
		\sum_{\mathbf{z}}
		q_\psi(\mathbf{z}\mid\mathbf{x})
		\Big[
		\log q_\psi(\mathbf{z}\mid\mathbf{x})
		-
		\log p_\theta(\mathbf{z}\mid\mathbf{x})
		-
		\log \kappa_t(\mathbf{x},\mathbf{z})
		+
		\log Z_{\theta,t}(\mathbf{x})
		\Big] \nonumber\\
		&=
		\mathbb{D}_{\mathrm{KL}}
		\!\left(
		q_\psi(\cdot\mid\mathbf{x})
		\,\|\,
		p_\theta(\cdot\mid\mathbf{x})
		\right)
		-
		\mathbb{E}_{\mathbf{z}\sim q_\psi(\cdot\mid\mathbf{x})}
		[
		\log \kappa_t(\mathbf{x},\mathbf{z})
		]
		+
		\log Z_{\theta,t}(\mathbf{x}).
		\label{eq:step_kl_relation}
	\end{align}
	Here, $\log Z_{\theta,t}(\mathbf{x})$ is constant with respect to the candidate distribution $q_\psi$, because $\kappa_t$ is computed from the E-step proposal $q_{\psi^{(t)}}$ and is fixed during the M-step.
	
	Rearranging Eqn.~\eqref{eq:step_kl_relation}, we obtain
	\begin{align}
		&\mathbb{E}_{\mathbf{z}\sim q_\psi(\cdot\mid\mathbf{x})}
		[
		\log \kappa_t(\mathbf{x},\mathbf{z})
		]
		-
		\mathbb{D}_{\mathrm{KL}}
		\!\left(
		q_\psi(\cdot\mid\mathbf{x})
		\,\|\,
		p_\theta(\cdot\mid\mathbf{x})
		\right) \nonumber\\
		&=
		\log Z_{\theta,t}(\mathbf{x})
		-
		\mathbb{D}_{\mathrm{KL}}
		\!\left(
		q_\psi(\cdot\mid\mathbf{x})
		\,\|\,
		p_{\theta,t}(\cdot\mid\mathbf{x},\rho=1)
		\right),
	\end{align}
	which proves the variational identity in Eqn.~\eqref{eq:variational_identity_step_thm}.
	
	Since KL divergence is nonnegative and equals zero if and only if its two arguments are identical almost surely,
	\begin{equation}
	\mathbb{D}_{\mathrm{KL}}
	\!\left(
	q_\psi(\cdot\mid\mathbf{x})
	\,\|\,
	p_{\theta,t}(\cdot\mid\mathbf{x},\rho=1)
	\right)
	\ge 0,
	\end{equation}
	with equality if and only if
	\begin{equation}
	q_\psi(\cdot\mid\mathbf{x})
	=
	p_{\theta,t}(\cdot\mid\mathbf{x},\rho=1)
	\quad \text{a.s.}
	\end{equation}
	Therefore,
	\begin{equation}
	\mathcal{L}_{\mathrm{VSD}}^{(t)}(q_\psi;\mathbf{x})
	\le
	\log Z_{\theta,t}(\mathbf{x}),
	\end{equation}
	and the upper bound is uniquely achieved at
	\begin{equation}
	q_t^\star(\cdot\mid\mathbf{x})
	=
	p_{\theta,t}(\cdot\mid\mathbf{x},\rho=1).
	\end{equation}
\end{proof}

\section{Extended Experiments}
\label{app:exp}
\subsection{Results for Multimodal Large Language Models}
\label{app:exp_mllm}
\begin{table}[H]
\centering
\caption{Comparison of speedup ratio $SR$ and acceptance length $\tau$ on standard MLLM benchmarks with temperature $T \in \{0, 1\}$. The subscripts denote the relative improvement compared to the corresponding baseline. For example, at $T=0$, MSD+VSD on LLaVA-1.5
7B achieves the average $SR$ of 2.45 with an additional +7.9\% gain over the MSD value of 2.27.}
\setlength{\tabcolsep}{0.45em}
\scalebox{0.7}{
\begin{tabular}{c | c w{c}{8em} || cccccccccccc|c @{\hspace{2.5em}} c@{\hspace{2.5em}}}
\toprule

\multirow{2}{*}{} 
& \multirow{2}{*}{\textbf{Model}} 
& \multirow{2}{*}{\textbf{Method}}
& \multicolumn{2}{c}{\textbf{VQAv2}}
& \multicolumn{2}{c}{\textbf{AI2D}}
& \multicolumn{2}{c}{\textbf{SQA Image}}
& \multicolumn{2}{c}{\textbf{ChartQA}}
& \multicolumn{2}{c}{\textbf{TextVQA}}
& \multicolumn{2}{c}{\textbf{Hallusion}}
& \multicolumn{2}{c}{\textbf{Average}} \\
\cmidrule(lr){4-17} 
& 
& 
& $SR \uparrow$ & $\tau \uparrow$
& $SR \uparrow$ & $\tau \uparrow$
& $SR \uparrow$ & $\tau \uparrow$
& $SR \uparrow$ & $\tau \uparrow$
& $SR \uparrow$ & $\tau \uparrow$
& $SR \uparrow$ & $\tau \uparrow$
& $SR \uparrow$ & $\tau \uparrow$ \\
\midrule

% Temperature = 0 区块
\multirow{14.5}{*}{\rotatebox{90}{\textbf{Temperature = 0}}}
& \multirow{8}{*}{\shortstack{\textbf{LLaVA-1.5}\\\textbf{7B}}}
& Lookahead  & 1.43 & 1.35 & 1.60 & 1.46 & 1.51 & 1.45 & 1.33 & 1.43 & 1.32 & 1.31 & 1.59 & 1.51 & 1.46 & 1.42 \\
& & Medusa  & 1.63 & 1.52 & 1.71 & 1.69 & 1.59 & 1.57 & 1.50 & 1.62 & 1.49 & 1.51 & 1.74 & 1.99 & 1.61 & 1.65 \\
& & EAGLE-2 & 2.37 & 4.53 & 1.98 & 3.35 & 1.79 & 3.41 & 1.81 & 3.25 & 1.84 & 3.62 & 2.04 & 3.55 & 1.97 & 3.62 \\
\cmidrule(lr){3-17} 
& & MSD  & 2.47 & 4.99 & 2.23 & 4.20 & 2.08 & 4.24 & 2.21 & 4.14 & 2.12 & 4.18 & 2.51 & 4.60 & 2.27 & 4.40 \\
& & {MSD+VSD} &
{2.69} & {5.10} &
{2.39} & {4.48} &
{2.22} & {4.53} &
{2.46} & {4.58} &
{2.29} & {4.41} &
{2.67} & {4.93} &
{2.45}\rlap{\textsubscript{\scriptsize{+7.9\%}}} & {4.67}\rlap{\textsubscript{\scriptsize{+6.3\%}}}
\\
& & \textbf{MSD+VSD (KL)} &
\textbf{2.75} & \textbf{5.13} &
\textbf{2.40} & \textbf{4.49} &
\textbf{2.23} & \textbf{4.53} &
\textbf{2.48} & \textbf{4.60} &
\textbf{2.31} & \textbf{4.44} &
\textbf{2.70} & \textbf{4.99} &
\textbf{2.48}\rlap{\textsubscript{\scriptsize\textbf{+9.0\%}}} & \textbf{4.70}\rlap{\textsubscript{\scriptsize\textbf{+6.8\%}}} \\
% \cmidrule(lr){3-17} 
% & & ViSpec  & 2.57 & 5.08 & 2.30 & 4.36 & 2.19 & 4.44 & 2.32 & 4.30 & 2.25 & 4.31 & 2.61 & 4.73 & 2.37 & 4.54 \\
% & & \textbf{ViSpec+VSD} &
% \textbf{2.78} & \textbf{5.19} &
% \textbf{2.47} & \textbf{4.69} &
% \textbf{2.32} & \textbf{4.70} &
% \textbf{2.54} & \textbf{4.69} &
% \textbf{2.40} & \textbf{4.52} &
% \textbf{2.77} & \textbf{5.03} &
% \textbf{2.54}\rlap{\textsubscript{\scriptsize\textbf{+7.2\%}}} & \textbf{4.80}\rlap{\textsubscript{\scriptsize\textbf{+5.9\%}}}\\
\cmidrule{2-17} 
& \multirow{8}{*}{\shortstack{\textbf{LLaVA-1.5}\\\textbf{13B}}}
& Lookahead  & 1.42 & 1.30 & 1.32 & 1.40 & 1.32 & 1.41 & 1.66 & 1.46 & 1.38 & 1.33 & 1.43 & 1.47 & 1.42 & 1.40 \\
& & Medusa  & 1.50 & 1.41 & 1.47 & 1.61 & 1.39 & 1.50 & 1.75 & 1.56 & 1.51 & 1.58 & 1.62 & 2.01 & 1.54 & 1.61 \\
& & EAGLE-2 & 2.36 & 4.29 & 2.04 & 3.38 & 1.76 & 3.16 & 2.21 & 3.18 & 1.81 & 3.35 & 2.09 & 3.41 & 2.05 & 3.46 \\
\cmidrule(lr){3-17} 
& & MSD  & 2.44 & 4.48 & 2.35 & 4.08 & 2.10 & 3.93 & 2.87 & 3.97 & 2.25 & 4.07 & 2.53 & 4.35 & 2.43 & 4.15 \\
& & {MSD+VSD}  &
{2.65} & {4.95} &
{2.60} & {4.50} &
{2.30} & {4.56} &
{3.21} & {4.50} &
{2.49} & {4.54} &
{2.77} & {4.79} &
{2.68}\rlap{\textsubscript{\scriptsize{+10.1\%}}} & {4.64}\rlap{\textsubscript{\scriptsize{+11.9\%}}} \\
& & \textbf{MSD+VSD (KL)} &
\textbf{2.69} & \textbf{4.99} &
\textbf{2.64} & \textbf{4.58} &
\textbf{2.35} & \textbf{4.61} &
\textbf{3.22} & \textbf{4.54} &
\textbf{2.52} & \textbf{4.57} &
\textbf{2.79} & \textbf{4.82} &
\textbf{2.70}\rlap{\textsubscript{\scriptsize\textbf{+11.1\%}}} & \textbf{4.69}\rlap{\textsubscript{\scriptsize\textbf{+12.9\%}}} \\
% \cmidrule(lr){3-17} 
% & & ViSpec  & 2.55 & 4.66 & 2.43 & 4.22 & 2.21 & 4.29 & 3.01 & 4.24 & 2.34 & 4.21 & 2.63 & 4.51 & 2.53 & 4.36 \\
% & & \textbf{ViSpec+VSD} &
% \textbf{2.73} & \textbf{5.07} &
% \textbf{2.66} & \textbf{4.61} &
% \textbf{2.35} & \textbf{4.62} &
% \textbf{3.29} & \textbf{4.60} &
% \textbf{2.55} & \textbf{4.66} &
% \textbf{2.89} & \textbf{4.87} &
% \textbf{2.74}\rlap{\textsubscript{\scriptsize\textbf{+8.5\%}}} & \textbf{4.74}\rlap{\textsubscript{\scriptsize\textbf{+8.8\%}}} \\
\midrule

% Temperature = 1 区块
\multirow{14.5}{*}{\rotatebox{90}{\textbf{Temperature = 1}}}
& \multirow{8}{*}{\shortstack{\textbf{LLaVA-1.5}\\\textbf{7B}}}
& Lookahead  & 1.34 & 1.24 & 1.12 & 1.26 & 1.24 & 1.30 & 1.28 & 1.27 & 1.08 & 1.21 & 1.31 & 1.38 & 1.23 & 1.28 \\
& & Medusa  & 1.51 & 1.93 & 1.19 & 1.43 & 1.38 & 1.87 & 1.36 & 1.82 & 1.12 & 1.43 & 1.47 & 1.88 & 1.34 & 1.73 \\
& & EAGLE-2 & 1.85 & 3.28 & 1.24 & 2.61 & 1.48 & 2.71 & 1.52 & 2.57 & 1.37 & 2.72 & 1.52 & 2.82 & 1.50 & 2.79 \\
\cmidrule(lr){3-17} 
& & MSD  & 2.06 & 3.59 & 1.49 & 3.13 & 1.77 & 3.26 & 1.81 & 3.02 & 1.46 & 2.93 & 1.80 & 3.39 & 1.74 & 3.22 \\
& & {MSD+VSD} &
{2.12} & {3.64} &
{1.69} & {3.58} &
{2.01} & {3.39} &
{1.84} & {3.20} &
{1.76} & {3.06} &
{1.86} & {3.51} &
{1.88}\rlap{\textsubscript{\scriptsize{+8.2\%}}} & {3.40}\rlap{\textsubscript{\scriptsize{+5.4\%}}}
\\
& & \textbf{MSD+VSD (KL)} &
\textbf{2.12} & \textbf{3.63} &
\textbf{1.73} & \textbf{3.62} &
\textbf{2.02} & \textbf{3.41} &
\textbf{1.84} & \textbf{3.20} &
\textbf{1.77} & \textbf{3.07} &
\textbf{1.88} & \textbf{3.56} &
\textbf{1.89}\rlap{\textsubscript{\scriptsize\textbf{+9.0\%}}} & \textbf{3.42}\rlap{\textsubscript{\scriptsize\textbf{+6.0\%}}} \\
% \cmidrule(lr){3-17} 
% & & ViSpec  & 2.15 & 3.67 & 1.58 & 3.37 & 1.90 & 3.33 & 1.86 & 3.18 & 1.62 & 2.97 & 1.84 & 3.41 & 1.83 & 3.32 \\
% & & \textbf{ViSpec+VSD} &
% \textbf{2.21} & \textbf{3.79} &
% \textbf{1.74} & \textbf{3.65} &
% \textbf{2.08} & \textbf{3.46} &
% \textbf{1.92} & \textbf{3.29} &
% \textbf{1.81} & \textbf{3.11} &
% \textbf{1.91} & \textbf{3.62} &
% \textbf{1.94}\rlap{\textsubscript{\scriptsize\textbf{+6.5\%}}} & \textbf{3.64}\rlap{\textsubscript{\scriptsize\textbf{+5.0\%}}}
% \\
\cmidrule{2-17}
& \multirow{8}{*}{\shortstack{\textbf{LLaVA-1.5}\\\textbf{13B}}}
& Lookahead  & 1.09 & 1.21 & 1.23 & 1.29 & 1.18 & 1.26 & 1.08 & 1.28 & 1.17 & 1.22 & 1.20 & 1.32 & 1.16 & 1.26 \\
& & Medusa  & 1.16 & 1.35 & 1.31 & 1.43 & 1.25 & 1.39 & 1.16 & 1.87 & 1.21 & 1.94 & 1.35 & 1.42 & 1.24 & 1.57\\
& & EAGLE-2 & 1.71 & 3.17 & 1.60 & 2.63 & 1.58 & 2.67 & 1.13 & 2.59 & 1.62 & 2.58 & 1.68 & 2.82 & 1.55 & 2.74 \\
\cmidrule(lr){3-17} 
& & MSD  & 1.81 & 3.44 & 1.81 & 3.08 & 1.77 & 3.21 & 1.39 & 3.06 & 1.92 & 3.01 & 1.96 & 3.41 & 1.78 & 3.20 \\
& & MSD+VSD &
2.00 & {3.54} &
{1.88} & {3.27} &
{1.88} & {3.51} &
{1.54} & {3.33} &
{2.04} & {3.21} &
{2.15} & 3.66 &
1.91\rlap{\textsubscript{\scriptsize+7.8\%}} & 3.42\rlap{\textsubscript{\scriptsize+6.7\%}} \\
& & \textbf{MSD+VSD (KL)} &
\textbf{2.01} & \textbf{3.55} &
\textbf{1.90} & \textbf{3.35} &
\textbf{1.89} & \textbf{3.54} &
\textbf{1.55} & \textbf{3.36} &
\textbf{2.07} & \textbf{3.29} &
\textbf{2.17} & \textbf{3.70} &
\textbf{1.93}\rlap{\textsubscript{\scriptsize\textbf{+8.7\%}}} & \textbf{3.47}\rlap{\textsubscript{\scriptsize\textbf{+8.2\%}}} \\
% \cmidrule(lr){3-17} 
% & & ViSpec  & 1.89 & 3.49 & 1.85 & 3.18 & 1.79 & 3.24 & 1.43 & 3.20 & 1.98 & 3.15 & 2.01 & 3.45 & 1.83 & 3.29 \\
% & & \textbf{ViSpec+VSD} &
% \textbf{2.04} & \textbf{3.61} &
% \textbf{1.90} & \textbf{3.29} &
% \textbf{1.90} & \textbf{3.54} &
% \textbf{1.56} & \textbf{3.35} &
% \textbf{2.11} & \textbf{3.29} &
% \textbf{2.19} & \textbf{3.74} &
% \textbf{1.95}\rlap{\textsubscript{\scriptsize\textbf{+6.8\%}}} & \textbf{3.47}\rlap{\textsubscript{\scriptsize\textbf{+5.6\%}}}
% \\
\bottomrule
\end{tabular}}
\label{tab:mllm_exp1}
\end{table}

\textbf{Implementations.}
We train speculative draft models based on the backbone of MSD~\citep{msd} from scratch using the VSD objective in Eqn.~\eqref{eq:vsd_eq5}. Following the token-level top-$K$ KL distillation loss design in HASS~\citep{hass}, we replace the the KL penalty term $\mathbb{D}_{\textrm{KL}}$ with the \emph{top-$10$ renormalized KL divergence} $\mathcal{L}_{\mathrm{Top}\text{-}10}$ in Eqn.~\eqref{eq:klv}. Specifically, denote $p_{\theta}(\cdot)$ and $q_{\psi}(\cdot)$ as the next-token distributions of the target LLM and the draft model, respectively. Let $\Omega$ denote the full vocabulary, and let $\hat{\Omega}\subset\Omega$ be the set of the $K$ tokens with the highest probability under the target LLM $p_{\theta}(\cdot)$. We can define renormalized target and draft distributions based on the domain of $\hat{\Omega}$:
\begin{equation}
\hat p_{\theta}(x) \triangleq \frac{p_{\theta}(x)}{\sum_{y\in\hat{\Omega}} p_{\theta}(y)},\quad
\hat q_{\psi}(x) \triangleq \frac{q_{\psi}(x)}{\sum_{y\in\hat{\Omega}} q_{\psi}(y)},\quad x\in\hat{\Omega}.
\end{equation}
Then we adopt the following token-level top-$K$ distillation objective:
\begin{equation}
\label{eq:klv}
\mathcal{L}_{\mathrm{Top}\text{-}K}
= - \sum_{x \in \hat{\Omega}} \hat p_{\theta}(x)\,\log \hat q_{\psi}(x).
\end{equation}
In our setting, we use $K{=}10$ over $\hat{\Omega}$ to form the KL penalty used in VSD (KL). We report this VSD variant as VSD (KL) in Tab.~\ref{tab:mllm_exp1}.

The training dataset sizes for both text-only and visual instruction-tuning datasets are identical. Specifically, the text-only instruction-tuning dataset consists of 68,000 randomly selected dialogue samples from ShareGPT, while the visual instruction-tuning dataset includes 68,000 randomly selected samples from LLaVA-Instruct-150K. We train draft models on eight L-40S GPUs for 80 epochs, with a batch size of 64 and a learning rate of 3e-4. We adopt the AdamW optimizer with $\beta_1=0.9$, $\beta_2=0.95$, and zero weight decay. We use a cosine learning rate scheduler, in which the learning rate is linearly warmed up from 0 to $3\times10^{-4}$ over the first 2\% of training steps, followed by cosine decay to a minimum learning rate of $0.01\times$ the peak value for the remainder of training. Gradient clipping is applied with a maximum norm of 0.5. We employ ZeRO Stage-2 optimization with overlapping communication enabled, using all-gather and reduce-scatter operations. The communication bucket size is set to $2\times10^8$. Hyperparameters in VSD were tuned via a grid search. Across all experiments, we set sampling length $L=7$ and sample size $S=40$. We find that VSD is not sensitive to the bounded weighting function $1<w(\ell)<2,\forall l$. Thus, we fixed the weighting factor $w(\ell):=1.05^l$, $\tau=0.2$, and $\alpha=0.1$, and VSD is generally robust to small variations in these hyperparameters. We adopt a dynamic draft tree with a fixed budget of 60 draft tokens, a maximum tree depth of 5, and a top-k of 8, following the configuration in prior works~\citep{msd,hivis} and report results averaged over three independent runs on L-40S GPU.

\textbf{Results.}
We report the acceptance lengths ($\tau$) and speedup ratios ($SR$) of VSD and VSD (KL) against all baselines across six benchmarks in Tab.~\ref{tab:mllm_exp1}. VSD and VSD (KL) consistently outperform all baselines across all datasets, models, and temperature settings.

As detailed in Tab.~\ref{tab:mllm_exp1}, our proposed VSD enhances inference efficiency across standard MLLM benchmarks, outperforming existing baselines. By applying VSD and VSD (KL) to MSD, we observe consistent gains in both speedup ratio ($SR$) and acceptance length ($\tau$) across varying model scales (LLaVA-1.5 7B and 13B) and sampling temperatures ($T \in \{0, 1\}$). \textbf{Case I: Greedy Decoding (T=0).} In the deterministic setting, VSD demonstrates robust generalization capabilities on LLaVA-1.5 13B, boosting the average speedup and acceptance length of MSD by up to \textbf{10.1\%} and \textbf{11.9\%}, respectively. Incorporating the top-$10$ KL term strengthens this effect: VSD (KL) yields up to \textbf{11.1\%} $SR$ and \textbf{12.9\%} $\tau$ improvements over MSD. \textbf{Case II:Stochastic Decoding (T=1).} In the stochastic sampling setting, VSD and VSD (KL) maintain superior performance; for LLaVA-1.5 13B, VSD improves MSD's average speedup by \textbf{7.8\%} and acceptance length by \textbf{6.7\%}, while VSD (KL) further improves these to \textbf{8.7\%} and \textbf{8.2\%}, respectively.

Thus, the results across diverse tasks, ranging from general visual recognition (e.g., VQAv2) to complex document reasoning (e.g., TextVQA), and models highlight the versatility and robustness of VSD. The consistent improvements, even at different temperatures, underscore VSD’s effectiveness in handling varying levels of stochasticity in token predictions. In addition, the consistent improvement of VSD (KL) over VSD supports the role of the KL term in Eqn.~\eqref{eq:vsd_eq5}: the KL regularizer explicitly encourages the draft distribution to align with the target distribution, which increases acceptance rate and thereby raises both $\tau$ and $SR$. Under this view, commonly used training objectives, such as standard cross-entropy~\citep{eagle3} or top-$K$ KL loss~\citep{hass}, act as practical surrogates for the KL component of the variational objective; empirically, top-$K$ KL provides the strongest alignment signal and yields the best performance in our setting.

\section{Draft Tree Construction}
\label{app:tree}
We first revisit the draft tree construction in EAGLE-2~\citep{eagle2}. Formally, given a training prefix (context) \(\mathbf{x}\), EAGLE-2 constructs a depth-\(d\) draft tree \(\mathbf{T}_t\) with the draft model \(M_q\):
\begin{equation}
\label{eq:tree}
	\mathbf{T}_t \;=\; \mathcal{G}(M_q, \mathbf{x}),
\end{equation}
where \(\mathcal{G}\) denotes the decoding sampling policy. The decoding policy \(\mathcal{G}\) grows the draft tree in two stages.

\textbf{(i) Layer-wise expansion phrase.} (see Eagle2~\citep{eagle2}, Section 4.1)

At depth \(\ell \in \{1,\ldots,d\}\), consider all frontier expansions (token edges) from the current layer. For each candidate expansion we compute a global acceptance score. We then select the top-\(k\) token expansions across the entire layer according to the global acceptance score and expand draft tree only on these children. This global competition allows promising siblings to outcompete locally greedy choices and prevents early commitment to a single path.

\textbf{(ii) Global pruning and ranking phrase.} (see Eagle 2~\citep{eagle2}, Section 4.2)

After reaching the maximum depth, we collect all leaves and rank them by the global acceptance score. We retain the top-\(g\) leaves and prune the rest. 

\section{Implementation Details}
\label{app:imp}
\subsection{Additional Experimental Details for Large Language Models}
\label{app:imp_llm}
For EAGLE~\citep{eagle}, EAGLE-2~\citep{eagle2}, EAGLE-3~\citep{eagle3}, HASS~\citep{hass}, GRIFFIN~\citep{griffin}, Medusa~\citep{medusa}, and Hydra~\citep{hydra}, we directly utilized the publicly released draft model parameters provided by the respective authors. For methods that do not require draft model training, such as PLD, Lookahead, and SPS, we evaluated performance using official code from their GitHub repositories. We train the draft models for 20 epochs on 8 RTX PRO 6000 GPUs, using a batch size of 64 and a base learning rate of $5\times10^{-6}$. We adopt the AdamW optimizer with $\beta_1=0.9$, $\beta_2=0.95$, and zero weight decay. The learning rate is linearly warmed up from 0 to $5\times10^{-6}$ during the first 2\% of training steps and then linearly decayed over the remaining steps. Gradient clipping is applied with a maximum norm of 0.5. We employ ZeRO Stage-2 optimization with overlapping communication enabled, using all-gather and reduce-scatter operations. The communication bucket size is set to $2\times10^8$.  Hyperparameters in VSD were tuned via a grid search. Across all experiments, we set sampling length $L=7$ and sample size $S=40$. We find that VSD is not sensitive to the bounded weighting function $1<w(\ell)<2,\forall l$. Thus, we fixed the weighting factor $w(\ell):=1.05^l$, $\tau=0.2$, and $\alpha=0.1$, and VSD is generally robust to small variations in these hyperparameters. We adopt a standard dynamic draft tree with a fixed budget of 60 draft tokens, a maximum tree depth of 7, and a top-k of 10, following the configuration shown to be effective in EAGLE-3~\citep{eagle3}, and report results averaged over three independent runs on RTX PRO 6000 GPU. 

% The compact draft tree has a fixed budget of 40 draft tokens, a maximum tree depth of 7, and a top-k of 5.

\subsection{Experimental Settings for Multimodal Large Language Models}
\label{app:imp_mllm}
\noindent{\textbf{Model \& Tasks.}}
We assess the performance of VSD on widely used open-source LLaVA families~\citep{llava}: LLaVA-1.5-7B and LLaVA-1.5-13B. We evaluate performance on six diverse multimodal reasoning benchmarks. The chosen benchmarks include visual question answering such as SQA Image~\citep{sqa}, VQAv2~\citep{vqav2}, and AI2D~\citep{ai2d}, document and chart understanding tasks including ChartQA~\citep{chartqa} and TextVQA~\citep{textvqa}, multimodal hallucination evaluation benchmarks such as Hallusion~\citep{hallusion}. To ensure generalizability, we use consistent model weights across all tasks without task-specific fine-tuning. Following~\citep{mllmspeculative,vispec}, we design prompts to elicit long, detailed responses with reasoning processes from the models.

\noindent{\textbf{Baselines \& Implementations.}}
MLLMs are designed to jointly process different modalities, allowing machines to interpret and generate content that combines different modalities, such as visual and textual inputs. Similar to LLMs, as MLLMs continue to grow in scale and complexity, their inference latency increases substantially, posing significant challenges for practical deployment. Applying speculative decoding to MLLMs is an emerging research direction. Specifically, recent
works~\citep{mllmspeculative,hivis,msd,dream,specvlm,lantern} focus on adapting the speculative decoding methods in the domain of vision-language models (VLMs) by effectively fusing the visual tokens in the input features. However, they overlook the problem of the training-decoding misalignment in the draft model training.

We compare VSD against established speculative decoding frameworks originally designed for language models: Lookahead~\citep{lookahead}, Medusa~\citep{medusa}, EAGLE-2~\citep{eagle2}, and two SoTA VLM speculative decoding frameworks: MSD~\citep{msd} and ViSpec~\citep{vispec}. By default, VSD initializes its draft model from MSD and ViSpec. To adapt Lookahead, Medusa, and EAGLE-2 for VLMs, we modify their input pipelines to process image patch embeddings from the VLM’s original vision encoder, enabling the draft models to generate speculative tokens conditioned on both visual and textual contexts. This adaptation is feasible as Lookahead, Medusa, and EAGLE-2 rely on general token prediction mechanisms that are theoretically compatible with multimodal sequences, provided the draft model can handle visual inputs. Vanilla autoregressive decoding serves as the baseline (speedup ratio = $1.00\times$). For a fair comparison, we do not relax the draft token acceptance condition of Medusa under non-greedy settings as proposed in the original paper; instead, we adopt the same acceptance condition as EAGLE-2.

We follow previous methods to train draft models. The training dataset sizes for both text-only and visual instruction-tuning datasets are identical. Specifically, the text-only instruction-tuning dataset consists of 68,000 randomly selected dialogue samples from ShareGPT, while the visual instruction-tuning dataset includes 68,000 randomly selected samples from LLaVA-Instruct-150K. We train draft models on eight L-40S GPUs for 40 epochs, with a batch size of 64 and a learning rate of 5e-6. We adopt the AdamW optimizer with $\beta_1=0.9$, $\beta_2=0.95$, and zero weight decay. We use a cosine learning rate scheduler, in which the learning rate is linearly warmed up from 0 to $5\times10^{-6}$ over the first 2\% of training steps, followed by cosine decay to a minimum learning rate of $0.01\times$ the peak value for the remainder of training. Gradient clipping is applied with a maximum norm of 0.5. We employ ZeRO Stage-2 optimization with overlapping communication enabled, using all-gather and reduce-scatter operations. The communication bucket size is set to $2\times10^8$. Hyperparameters in VSD were tuned via a grid search. Across all experiments, we set sampling length $L=7$ and sample size $S=40$. We find that VSD is not sensitive to the bounded weighting function $1<w(\ell)<2,\forall l$. Thus, we fixed the weighting factor $w(\ell):=1.05^l$, $\tau=0.2$, and $\alpha=0.1$, and VSD is generally robust to small variations in these hyperparameters. We adopt a dynamic draft tree with a fixed budget of 60 draft tokens, a maximum tree depth of 5, and a top-k of 8, following the configuration in prior works~\citep{msd,hivis} and report results averaged over three independent runs on L-40S GPU.

\noindent{\textbf{Metrics.}}  For fairness and  consistency, we follow priors and fix the decoding batch size to 1 and evaluate under decoding temperatures $T \in \{0,1\}$. Same as prior works like EAGLE-2~\citep{eagle2}, VSD is lossless and can preserve output quality. Thus, we focus on two efficiency metrics: (i) \textbf{Speedup Ratio} ($SR$) — the wall-clock time acceleration relative to vanilla decoding, and (ii) \textbf{Acceptance Length} ($\tau$) — the average number of tokens accepted per draft-verification cycle.  
%%%%%%%%%%%%%%%%%%%%%%%%%%%%%%%%%%%%%%%%%%%%%%%%%%%%%%%%%%%%%%%%%%%%%%%%%%%%%%%
%%%%%%%%%%%%%%%%%%%%%%%%%%%%%%%%%%%%%%%%%%%%%%%%%%%%%%%%%%%%%%%%%%%%%%%%%%%%%%%

\end{document}